\newtheorem{theorem}{Theorem}
\newtheorem{corollary}{Corollary}
\newtheorem{lemma}{Lemma}
\newtheorem{remark}{Remark}
\newtheorem{definition}{Definition}
\newtheorem{assumption}{Assumption}
\newtheorem{proposition}{Proposition}
\newtheorem{problem}{Problem}
\newtheorem{example}{Example}
\renewcommand{\t}{^{\mbox{\tiny\sf T}}}
\newcommand{\bremark}{\begin{remark}
\begin{rm}}
\newcommand{\eremark}{ \end{rm}\hfill \rule{1mm}{2mm}
\end{remark} }
\newcommand{\btheorem}{\begin{theorem} \begin{it}}
\newcommand{\etheorem}{\end{it} \hfill \rule{1mm}{2mm}
\end{theorem} }
\newcommand{\blemma}{\begin{lemma} \begin{it} }
\newcommand{\elemma}{ \end{it} \hfill\rule{1mm}{2mm}
\end{lemma} }
\newcommand{\bcorollary}{\begin{corollary} \begin{it} }
\newcommand{\ecorollary}{ \end{it} \hfill\rule{1mm}{2mm}
\end{corollary} }
\newcommand{\bdefinition}{\begin{definition} }
\newcommand{\edefinition}{ \hfill\rule{1mm}{2mm}
\end{definition} }
\newcommand{\bproposition}{\begin{proposition} }
\newcommand{\eproposition}{\hfill \rule{1mm}{2mm}
\end{proposition} }
\newcommand{\bexample}{\begin{example} \begin{rm}}
\newcommand{\eexample}{ \end{rm} \hfill\rule{1mm}{2mm}
\end{example} }
\newcommand{\bassumption}{\begin{assumption} }
\newcommand{\eassumption}{\hfill \rule{1mm}{2mm}
\end{assumption} }
\newcommand{\balgorithm}{\medskip\begin{algorithm} \rm}
\newcommand{\ealgorithm}{ \hfill \rule{1mm}{2mm}\medskip
\end{algorithm} }
\newcommand{\basm}{\begin{assumption} \begin{rm} }
\newcommand{\easm}{ \end{rm} \hfill\rule{1mm}{2mm}
\end{assumption} }
\begin{document}

\title{MOFM-Nav: On-Manifold Ordering-Flexible Multi-Robot Navigation}

\author{
\IEEEauthorblockN{Bin-Bin Hu, Weijia Yao, Ming Cao
}
\thanks{
Bin-Bin Hu and Ming Cao are with the Engineering and Technology Institute Groningen, Faculty of Science and Engineering, University of Groningen, 9747 AG Groningen, The Netherlands (e-mail: b.hu@rug.nl, m.cao@rug.nl).
}
\thanks{Weijia Yao is with the School of Robotics, Hunan University, Hunan 410082, P.R.~China. (e-mail:
wjyao@hnu.edu.cn). }
}

\maketitle

\begin{abstract}
This paper addresses the problem of multi-robot navigation where robots maneuver on a desired \(m\)-dimensional (i.e., \(m\)-D) manifold in the $n$-dimensional Euclidean space, and maintain a {\it flexible spatial ordering}.
We consider $ m\geq 2$, and the multi-robot coordination is achieved via non-Euclidean metrics. 
Existing work shows that by incorporating $m-1$ arbitrary auxiliary vectors into the traditional guiding vector field (GVF), the $m$-D manifold can be embedded in an $(n+m)$-dimensional Euclidean space such that the possible singularity of the GVF is eliminated. 
However, since the $m$-D manifold can be characterized by the zero-level sets of $n$ implicit functions, the last $m$ entries of the GVF propagation term become {\it strongly coupled} with the partial derivatives of these functions if the auxiliary vectors are not appropriately chosen.
These couplings not only influence the on-manifold maneuvering of robots, but also pose significant challenges to the further design of the ordering-flexible coordination via non-Euclidean metrics. 

To tackle this issue, we first identify a feasible solution of auxiliary vectors 
such that the last $m$ entries of the propagation term are effectively decoupled to be the same constant. Then, we redesign the coordinated GVF (CGVF) algorithm to {\it boost} the advantages of singularities elimination and global convergence by treating $m$ manifold parameters as additional $m$ virtual coordinates. Furthermore, we enable the on-manifold ordering-flexible motion coordination by allowing each robot to share $m$ virtual coordinates with its time-varying neighbors and a virtual target robot, which {\it circumvents} the possible complex calculation if Euclidean metrics were used instead. Finally, we showcase the proposed algorithm's flexibility, adaptability, and robustness through extensive simulations with different initial positions, higher-dimensional manifolds, and robot breakdown, respectively.
\end{abstract}

\begin{IEEEkeywords}
Cooperative control, agents and autonomous systems, on-manifold ordering-flexible navigation, guiding vector field design
\end{IEEEkeywords}

\IEEEpeerreviewmaketitle
\section{Introduction}

Robot navigation on manifolds is essential in the robotic research, offering broad applications in environmental monitoring, target escorting, pipeline inspection, and celestial exploration, owing to the manifolds' features of capturing precise geometric characteristics, handling nonlinear constraints, and increasing navigation efficiency \cite{rayhana2021valve,dunbabin2012robots,hu2024ordering}. Among these scenarios, on-manifold multi-robot navigation has emerged as a prime research topic, leveraging the efficiency, extended range, and resilience of multi-robot systems~\cite{fan2020distributed}.

\subsection{Related Works}
 
The pioneering work of on-manifold multi-robot navigation can be traced back to multi-robot path navigation (i.e., $1$-D manifold) by the projection-point~\cite{aguiar2007trajectory}, 
line-of-sight (LOS) \cite{rysdyk2006unmanned} and guiding-vector-field (GVF) methods \cite{kapitanyuk2017guiding}, etc. Since then, more efforts have been emphasized on designing strategies for tackling different kinds of paths, such as adaptive control for straight paths \cite{borhaug2010straight}, virtual structures for sinusoidal paths \cite{ghommam2010formation}, nested invariant set and persistent exciting conditions for circular paths \cite{doosthoseini2015coordinated,hu2021bearing}, a dual-step design for general parameterized paths \cite{ghabcheloo2009coordinated} and hybrid control for nonparameterized paths \cite{lan2011synthesis}.
For more complex paths in $3$-D Euclidean space, significant endeavors have also been devoted to periodic-changing closed paths by output regulation theory \cite{sabattini2015implementation}, simple closed paths by artificial vector fields \cite{pimenta2013Decentralized}, and even self-intersecting paths by higher-dimensional GVF methods \cite{hu2023spontaneous} with an additional virtual coordinate. Despite the significant advancements in the aforementioned works \cite{borhaug2010straight,ghommam2010formation,doosthoseini2015coordinated,hu2021bearing,ghabcheloo2009coordinated,lan2011synthesis,sabattini2015implementation,pimenta2013Decentralized,hu2023spontaneous}, they were largely limited to simple 1-D manifold (i.e., path) navigation missions. Multi-robot navigation on higher-dimensional manifolds (i.e., $m$-D manifold, $m\geq 2$) remains insufficiently explored, even if we only restrict our attention to 2-D manifolds. A major challenging problem is how to coordinate the motions of multiple robots because the coordination among robots may only rely on non-Euclidean metrics rather than Euclidean metrics on $m$-D manifold ($m\geq 2$).

In parallel, there also exist several works exploring higher-dimensional on-manifold multi-robot navigation. For instance, a tangents-to-geodesics-based gradient approach was proposed in \cite{bhattacharya2014multi} to achieve multi-robot coverage and exploration on $2$-D Riemannian manifolds. A graph-theoretical interpretation was developed in \cite{montenbruck2017fekete} to govern robots to move to a given $2$-D sphere with the desired formation asymptotically. However, these two works \cite{bhattacharya2014multi,montenbruck2017fekete} relied on coordinates from the manifold or the Euclidean space, both of which are challenging to compute. Later, a coordinated GVF algorithm was proposed in \cite{yao2022guiding,hu2024coordinated}  for $2$-D surface navigation missions, where the coordination among robots on the manifold was elegantly achieved through the interaction of two additional virtual coordinates. Nevertheless, the aforementioned studies \cite{bhattacharya2014multi,montenbruck2017fekete,yao2022guiding,hu2024coordinated} only focused on $2$-D manifolds, where the multi-robot navigation on general higher-dimensional manifolds remains an open question.

Another crucial question is how to achieve ordering-flexible coordination on manifolds, which can enhance flexibility, adaptability, and navigation efficiency in dynamic environments. Most of the aforementioned navigation works \cite{borhaug2010straight,ghommam2010formation,doosthoseini2015coordinated,hu2021bearing,ghabcheloo2009coordinated,sabattini2015implementation,pimenta2013Decentralized,bhattacharya2014multi,montenbruck2017fekete,yao2022guiding} rely on fixed-ordering coordination, where the desired formation pattern is predetermined for each robot with fixed spatial orderings. This approach is inefficient and may even fail in changing environments. While the introduction of virtual coordinates in the previous GVF design \cite{hu2023spontaneous,hu2024coordinated} provides a promising idea for coordinating robots on $1$-D and $2$-D manifolds conveniently, it cannot be directly extended to general higher-dimensional manifolds. 
The main challenge lies in the suitable decoupling of the propagation term when integrating $m-1$ arbitrary auxiliary vectors into the traditional GVF design \cite{hu2023spontaneous,hu2024coordinated}. In particular, since the $m$-dimensional manifold can be characterized by the zero-level sets of implicit functions, the last $m$ entries of the propagation term of the GVF become strongly coupled with the partial derivative of the implicit functions if the auxiliary vectors are poorly chosen.
This complexity presents a significant challenge to the further design of ordering-flexible navigation on the general manifold $m\geq 2$.

\subsection{Contribution}

To bridge the gap between ordering-flexible coordination on the $m$-D manifold (i.e., $m\geq 2$) and suitable propagation decoupling in the GVF design, we first identify a feasible solution of $m-1$ auxiliary vectors such that the last $m$ entries of the propagation term are effectively decoupled to be the same constant. Then, we redesign the CGVF algorithm for $m$-D manifold to boost the advantages of singularities elimination and global convergence by treating $m$ manifold parameters as additional $m$ virtual coordinates. Furthermore, we enable the on-manifold ordering-flexible coordination by allowing each robot to share the virtual coordinates with its time-varying neighbors and a virtual target robot. Finally, we showcase the flexibility, adaptability, and robustness of the proposed algorithm through extensive simulation. The main contributions are summarized in three-fold.
\begin{enumerate}
  
\item We identify a feasible solution consisting of $m-1$ auxiliary vectors to decouple the last $m$ entries of the propagation term to be the same constant in GVF, paving the way for the ordering-flexible coordination design on the $m$-D manifold ($m \geq 2$).

\item We design a distributed CGVF algorithm for a team of robots to achieve the ordering-flexible navigation maneuvering on the desired $m$-D manifold (i.e., {\it MOFM-Nav}). 

\item We ensure global convergence by treating $m$ manifold parameters as additional $m$ virtual coordinates in the CGVF redesign, and conveniently enable on-manifold coordination with flexible orderings by sharing only $m$ virtual coordinates (i.e., non-Euclidean metrics) among time-varying neighbors and a virtual target robot.


\end{enumerate}

\subsection{Notation}
We denote the real, positive, and integer numbers to be $\mathbb{R}, \mathbb{R}^+, \mathbb{Z}$, respectively. We denote the integer set $\{m\in\mathbb{Z}~|~ i\leq m\leq j\}$ by $\mathbb{Z}_i^j, \forall i\leq j \in\mathbb{Z}$. If a function $f$ is twice continuously differentiable, we denote by $f\in\mathcal{C}^2$. Given a signal $\bold{z}$ in the $n$-dimensional Euclidean space, we denote by $\bold{z}\in\mathbb{R}^n$. We denote the norm of the signal $\bold{z}$ by $\|\bold{z}\|$. We denote an $n$-dimensional identity matrix by $I_n$. The $m\times n$-dimensional matrix whose terms are all zeros is denoted by $\textbf{0}_{m\times n}$.

The remainder of the paper is organized as follows. In Section~\ref{sec_preliminaries}, we describe the necessary preliminaries for on-manifold ordering-flexible navigation. In Section~\ref{sec_problem_formulation}, we formulate the {\it MOFM-Nav} problem. In Section~\ref{secion_3_results}, we first introduce the feasible auxiliary vectors for decoupling the propagation term (see Lemma~\ref{lemma_existance_nu}) and then design the CGVF controller. In Section~\ref{sec_convergence}, we show the convergence results. The first result (see Lemma~\ref{lemma_well_define}) prevents all the ill-defined points of the proposed CGVF algorithm. The second result (see Lemma~\ref{lemma_manifold_convergence}) shows that all the robots converge to the desired manifold. The third result (see Lemmas~\ref{lemma_manifold_maneuvering}-\ref{lemma_manifold_OF_coordination}) indicates the maneuvering and ordering-flexible motion coordination of robots on the manifold. In Section~\ref{section_algorithm}, we conduct extensive simulations for algorithm verification. Finally, we draw the conclusion in Section~\ref{section_conclusion}.

\section{Preliminaries}
\label{sec_preliminaries}
In this section, we will introduce the necessary building blocks for the {\it MOFM-Nav} missions

\subsection{Multi-Robot System}
The first building block is a team of robots with the index $\mathcal V=\{1, \cdots, N\}$, where each robot is governed by the first-order dynamics
\begin{align}
\label{robot_dynamic}
\dot{\bold{x}}_i=\bold{u}_i, i\in\mathcal V,
\end{align}
where $\bold{x}_i=[x_{i,1}, \cdots, x_{i,n}]\t\in\mathbb{R}^n, \bold{u}_i=[u_{i,1}, \cdots, u_{i,n}]\t\in\mathbb{R}^n,$ are the position and input of robot $i, i\in\mathcal V$, respectively. The reason for considering the simple first-order dynamics in~\eqref{robot_dynamic} is that we focus on the upper-level, on-manifold multi-robot navigation rather than the low-level vehicle locomotion. When dealing with complex robot dynamics, such as ground vehicles, fixed-wing aircraft, and unmanned surface vessels \cite{wei2023hierarchical,yao2022guiding}, the velocity input $\bold{u}_i$ can be conveniently treated as a high-level command, where the overall performance can still be guaranteed via a hierarchical control~structure \cite{hu2024ordering,hu2024coordinated}.

\subsection{$m$-D Manifold}
The second building block is an $m$-D desired manifold $\mathcal M$ in the $n$-dimensional Euclidean space, which can be characterized by a set of $n$ zero-level parametric functions $\phi_i$ \cite{hu2024coordinated},
\begin{align}
\label{desired_manifold}
\mathcal M^{phy}=&\{\bm\sigma:=[\sigma_1, \cdots, \sigma_n]\in\mathbb{R}^n~|~\phi_j(\bm\sigma):=\sigma_j- \nonumber\\
&f_j(\omega_{1}\cdots, \omega_{m})=0, j\in\mathbb{Z}_1^n, \omega_1,\dots, \omega_m \in \mathbb{R}\},
\end{align}
where $\sigma_1, \cdots, \sigma_n\in\mathbb{R}$ are $n$ coordinates, $\omega_{1}, \cdots, \omega_{m}\in\mathbb{R}$ are $m$ parameters of the manifold, and $\phi_j(\cdot): \mathbb{R}\rightarrow \mathbb{R}$ are twice continuously differentiable, i.e., $\phi_j\in \mathcal C^2$. Note that $m$ virtual coordinates in \eqref{desired_manifold} determine the $m$-dimensional manifold $\mathcal M$, and are independent of the dimensions of the Euclidean space where they live. For instance, $\mathcal M^{phy}$ is a desired $0$-D point if $m=0$, a $1$-D path $\mathcal P^{phy}$ if $m=1$, and a $2$-D surface $\mathcal S^{phy}$ if $m=2$ \cite{lee2010introduction}. Additionally, if $m\geq 2$, the traditional Euclidean metric becomes unsuitable for achieving the on-manifold navigation and coordination on the $m$-D manifold anymore, while these two tasks only rely on the complex geometric (non-Euclidean) metrics. It is still worth mentioning that not all the parametric functions in \eqref{desired_manifold} can be utilized to illustrate the $m$-D desired manifold, where the additional assumption is given in~{\bf A1} later.

\subsection{Higher-Dimentional GVF for $m$-D Manifold}
To avoid using the complex geometric metrics directly on the $m$-D manifold, we introduce the third building block of the higher-dimensional GVF (HGVF) via implicit functions and $m$ parameters (i.e., $\phi_j$ and $\omega_{1}, \cdots, \omega_{m}$ in \eqref{desired_manifold}). 

Analogous to \eqref{desired_manifold}, suppose that the $i$-th desired (physical) $m$-D manifold $\mathcal M_i^{phy}$ for robot $i, i\in\mathcal V$, to converge in the $n$-denmensional Euclidean space is characterized by
\begin{align}
\label{ith_desired_manifold}
\mathcal M_i^{phy}=&\{\bm\sigma_i:=[\sigma_{i,1}, \cdots, \sigma_{i,n}]\t\in\mathbb{R}^n~|~\phi_{i,j}(\bm\sigma_{i}):=\nonumber\\
&\sigma_{i,j}-f_{i,j}(\omega_{i,1}, \cdots, \omega_{i,m})=0, j\in\mathbb{Z}_1^n,\omega_1,\nonumber\\
&\dots, \omega_m \in \mathbb{R}\},
\end{align}
where $\bm\sigma_i\in\mathbb{R}^n$ is the coordinates of robot $i$, $\phi_{i,j}\in\mathbb{R}, i\in\mathcal V, j\in\mathbb{Z}_1^n$ are the implicit functions, $f_{i,j}(\omega_{i,1}, \cdots, \omega_{i,m})\in\mathcal C^2, i\in\mathcal V, j\in\mathbb{Z}_1^n$ are twice continuously differentiable implicit functions, $\omega_{i,1}\in\mathbb{R}, \cdots, \omega_{i,m}\in\mathbb{R}$ are the virtual coordinates. Let $\bm\xi_i:=[\sigma_{i,1}, \cdots, \sigma_{i,n}, \omega_{i,1}, \cdots, \omega_{i,m}]\t\in\mathbb{R}^{n+m}$ be the generalized coordinate vector of robot $i$, one has that the higher-dimensional manifold $\mathcal M_i^{hgh}$ embeded in the $n+m$-dimensional Euclidean space becomes
\begin{align}
\label{high_ith_desired_manifold}
\mathcal M_i^{hgh}=&\{\bm\xi_i:=[\sigma_{i,1}, \cdots, \sigma_{i,n}, \omega_{i,1}, \cdots, \omega_{i,m}]\t\in\mathbb{R}^{n+m}~|~\nonumber\\
&\phi_{i,j}(\bm\xi_i):=\sigma_{i,j}-f_{i,j}(\omega_{i,1}, \cdots, \omega_{i,m})=0, \nonumber\\
&j\in\mathbb{Z}_1^n, \omega_1, \dots, \omega_m \in \mathbb{R}\}.
\end{align}

\begin{definition}
\label{def_cross_product}
(Generalized cross product \cite{galbis2012vector}) Let $\mathbf{g}_{1}\in \mathbb{R}^n$, $ \cdots,  \mathbf{g}_{n-1}\in \mathbb{R}^n$ be $n-1$ linearly independent vectors with $\mathbf{g}_i:=[g_{i,1}, \cdots, g_{i,n}]\t, i\in\mathbb{Z}_1^n$, the generalized cross product $\times(\mathbf{g}_{1}, \cdots, \mathbf{g}_{n-1})$ is calculated by
\begin{align*}
\times(\mathbf{g}_{1}, \cdots, \mathbf{g}_{n-1})
=&\sum_{j=1}^n(-1)^{j-1}\mathrm{det}(G_i)\mathbf{b}_{j},
\end{align*}
where $\times: \mathbb{R}^{n}\times \cdots \times \mathbb{R}^{n}\rightarrow\mathbb{R}^{n}$ denote the cross product, $\bold{b}_j=[0, \cdots, 1, \cdots, 0]\t\in\mathbb{R}^{n}$ is the basis column vector with the $j$-th component being $1$ and the other components being $0$, and $\mathrm{det}(G_j), j\in\mathbb{Z}_{1}^{n}$ represent the determinants of the submatrices $G_j\in\mathbb{R}^{(n-1)\times (n-1)},$ by deleting $j$-th column of the matrix of $[\mathbf{g}_{1}, \cdots, \mathbf{g}_{n-1}]\t\in\mathbb{R}^{(n-1)\times n}$ below
\begin{align*}
&\mathrm{det}(G_j)=\nonumber\\
&
\left|\begin{array}{cccccc} 
g_{1,1} &    \cdots & g_{1, j-1} &  g_{1,j+1} & \cdots & g_{1,n}  \\ 
g_{2,1} &    \cdots & g_{2, j-1} &  g_{2,j+1} & \cdots & g_{2,n}  \\ 
    \vdots &    \ddots  &  \vdots &   \vdots &  \ddots&  \vdots \\ 
g_{n-2,1} &    \cdots & g_{n-2, j-1} &  g_{n-2,j+1} & \cdots & g_{n-2,n}  \\ 
g_{n-1,1} &    \cdots & g_{n-1, j-1} &  g_{n-1,j+1} & \cdots & g_{n-1,n}  \\ 
\end{array}\right|. 
\end{align*} 
\end{definition}

Using the generalized cross product in Definition~\ref{def_cross_product}, we can introduce the definition of HGVF for the $m$-D manifold below.
\begin{definition}
\label{def_GVF_manifold}
(HGVF for $m$-D manifold) Given the $i$-th manifold $\mathcal M_i^{hgh}$ in \eqref{high_ith_desired_manifold} for robot $i$, the corresponding HGVF $\bm\chi_i^{hgh}\in\mathbb{R}^{n+m}$ is designed to be,
\begin{align}
\label{ith_GVF}
\bm\chi_i^{hgh}=&\times(\nabla\phi_{i,1}(\bm\xi_i), \cdots, \nabla\phi_{i,n}(\bm\xi_i), \bm\nu_{i}^{[1]}, \cdots, \bm\nu_{i}^{[m-1]})\nonumber\\
	 	    &-\sum_{j=1}^n k_{i,j}\phi_{i,j}(\bm\xi_i) \nabla\phi_{i,j}(\bm\xi_i),
\end{align}
where $\times: \overbrace{\mathbb{R}^{n+m}\times \cdots \times \mathbb{R}^{n+m}}^{n+m-1}\rightarrow\mathbb{R}^{n+m}$ represents the generalized cross product in Definition~\ref{def_cross_product}, and $k_{i,j}\in\mathbb{R}^{+}, j\in\mathbb{Z}_1^n$ are the control gains for the convergence term. The vector $\nabla\phi_{i,j}(\bm\xi_i): \mathbb{R}^{n+m}\rightarrow\mathbb{R}^{n+m}$ denotes the gradient of $\phi_{i,j}(\bm\xi_i)$ w.r.t $\bm\xi_i$ in \eqref{high_ith_desired_manifold}, i.e.,
\begin{align}
\label{gradient_phi_ij}
\nabla\phi_{i,j}(\bm\xi_i):=[\overbrace{0, \cdots, 1, \cdots, 0}^n, \overbrace{ -\partial f_{i,j}^{[1]}, \cdots, -\partial f_{i,j}^{[m]}}^m]\t
\end{align} 
with the $j$-th component of the gradient vector $\nabla\phi_{i,j}(\bm\xi_i)$ being~$1$, and
\begin{align}
\label{eq_partial_fij}
\partial f_{i,j}^{[l]}=\frac{\partial f_{i,j}(\omega_{i,1}, \cdots, \omega_{i,m})}{\partial \omega_{i,l}}, l\in\mathbb{Z}_1^m
\end{align} 
being the partial derivative of $f_{i,j}(\omega_{i,1}, \cdots, \omega_{i,m})$ in \eqref{ith_desired_manifold} w.r.t $l$-th virtual coordinates $\omega_{i,l}$ of robot $i$. Note that 
\begin{align}
\label{auxiliary_vector}
\bm\nu_{i}^{[1]} = &[\nu_{i,1}^{[1]}, \dots, \nu_{i,n+m}^{[1]}] \in \mathbb{R}^{n+m}, \\
\vdots~~=&~~~~~~~~~~~~~~\vdots\nonumber\\
\bm\nu_{i}^{[m-1]} =& [\nu_{i,1}^{[m-1]}, \dots, \nu_{i,n+m}^{[m-1]}] \in \mathbb{R}^{n+m},
\end{align} are $m-1$ auxiliary constant vectors of robot~$i$, which are employed to ensure that the propagation term $\times(\nabla\phi_{i,1}(\bm\xi_i), \dots$, $\nabla\phi_{i,n}(\bm\xi_i), \bm\nu_{i}^{[1]}, $ $\dots, \bm\nu_{i}^{[m-1]})$ in \eqref{ith_GVF} is well-defined.
\end{definition}

In Definition~\ref{def_GVF_manifold}, the HGVF $\bm\chi_i^{hgh}$ in~\eqref{ith_GVF} can be conveniently regarded as an upper-level desired velocity for robot $i$ of higher-order dynamics. Therein, the first propagation term  $\times(\nabla\phi_{i,1}(\bm\xi_i), \cdots$, $ \nabla\phi_{i,n}(\bm\xi_i), \bm\nu_{i}^{[1]}, \cdots, \bm\nu_{i}^{[m-1]})$ in \eqref{ith_GVF}, orthogonal to all the gradients and auxiliary vectors $\nabla\phi_{i,j}(\bm\xi_i), j\in\mathbb{Z}_1^n, \bm\nu_{i}^{[1]}, \cdots, \bm\nu_{i}^{[m-1]}$ in \eqref{gradient_phi_ij} and \eqref{auxiliary_vector}, respectively, can keep robot $i$ maunver on its desired manifold $\mathcal M_i^{hgh}$, and the second convergence term $-\sum_{j=1}^n k_{i,j}\phi_{i,j}(\bm\xi_i) \nabla\phi_{i,j}(\bm\xi_i)$ in \eqref{ith_GVF} can guide robot~$i$ to converge to the desired manifold $\mathcal M_i^{hgh}$. Essentially, by introducing $\omega_{i,1}, \cdots, \omega_{i,m}$ as additional $m$ virtual coordinates and projecting the HGVF $\bm\chi_i^{hgh}$ in \eqref{ith_GVF} back to the first $n$-dimensional Euclidean space, the designed $\bm\chi_i^{hgh}$ can eliminate the singular points (i.e., $\bm\chi_i^{hgh}\neq\bold{0}_{m+n}$) and ensure the global convergence to the original $\mathcal M_i^{phy}$ in~\eqref{ith_desired_manifold} \cite{yao2021singularity}.


However, the designed HGVF $\bm\chi_i^{hgh}$ in \eqref{ith_GVF} only features the on-manifold navigation (i.e., convergence and maneuvering) for an individual robot, which has not touched the scenario of on-manifold robot-robot coordination. Later, we will show that this coordination can be conveniently achieved through the interaction of non-Euclidean metrics (i.e., $m$ virtual coordinates $\omega_{i,1}, \cdots, \omega_{i,m}$), as detailed later in Section~\ref{subsec_manifold_coordination}. A key technical challenge lies in how to find a suitable selection of $m-1$ auxiliary constant vectors $\bm\nu_{i}^{[1]}, \cdots, \bm\nu_{i}^{[m-1]}$, to decouple the propagation term $\times(\nabla\phi_{i,1}(\bm\xi_i), \cdots, \nabla\phi_{i,n}(\bm\xi_i), \bm\nu_{i}^{[1]}, \cdots, \bm\nu_{i}^{[m-1]})$ in \eqref{ith_GVF}, which enables the on-manifold coordination with a flexible ordering. 

%
%
%
%


%
%

\section{Problem Formulation}
\label{sec_problem_formulation}
With the necessary blocks in Section~\ref{sec_preliminaries}, we will first introduce different subtasks, namely, on-manifold convergence \& maneuvering, and ordering-flexible coordination for {\it MOFM-Nav}, and then formulate the problem.

\subsection{On-Manifold Multi-Robot Convergence \& Maneuvering}
Let $\bold{p}_i:=[\bold{x}_i, \bm\omega_i ]\t\in\mathbb{R}^{n+m}$ with $\bold{x}_i$ given in \eqref{robot_dynamic} and the vector $\bm\omega_i=[\omega_{i,1}, \cdots, \omega_{i,m}]\t\in\mathbb{R}^m$ in \eqref{ith_desired_manifold}. By substituting the position $\bold{x}_i=[x_{i,1}, \cdots, x_{i,n}]\t$ of the $i$-th robot in \eqref{robot_dynamic} for $\bm\sigma_i$ into the $i$-th $\mathcal M_i^{hgh}$ in \eqref{high_ith_desired_manifold}, we obtain the multi-robot on-manifold convergence errors 
\begin{align}
\label{subtask_manifold_convergence}
\phi_{i,j}(\bold{p}_i)=x_{i,j}-f_{i,j}(\bm\omega_i), j\in\mathbb{Z}_1^n.
\end{align}
Hence, the on-manifold multi-robot convergence is achieved if the following condition is satisfied, 
\begin{align}
\label{condition_convergence}
\lim_{t\rightarrow\infty}\phi_{i,j}(\bold{p}_i(t))=0, \forall i\in\mathcal V, j\in\mathbb{Z}_1^n. 
\end{align}

When each robot $i$ governed by $\bm\chi_i^{hgh}$ in \eqref{ith_GVF} approaches the manifold $\mathcal M_i^{hgh}$ (i.e., \eqref{condition_convergence} is fulfilled), the convergence term in \eqref{ith_GVF} becomes zero, i.e., $-\sum_{j=1}^n k_{i,j}\phi_{i,j}(\bold{p}_i) \nabla\phi_{i,j}(\bold{p}_i)=\bold{0}_{n+m}$, which implies that $\bm\chi_i^{hgh}$ in \eqref{ith_GVF} becomes 
\begin{align}
\label{degenarated_model}
\bm\chi_i^{hgh}=\times(\nabla\phi_{i,1}(\bold{p}_i), \cdots, \nabla\phi_{i,n}(\bold{p}_i), \bm\nu_{i}^{[1]}, \cdots, \bm\nu_{i}^{[m-1]}).
\end{align} 
Combining with the definition of $\bm\omega_i:=[\omega_{i,1}, \cdots, \omega_{i,m}]$ which is given in \eqref{subtask_manifold_convergence}, one has that the new HGVF $\bm\chi_i^{hgh}$ in \eqref{degenarated_model} is only determined by $\bm\omega_i$ and $\bm\nu_{i}^{[1]}, \cdots, \bm\nu_{i}^{[m-1]}$, provided that robot $i$ is already on the manifold. 

Furthermore, since $\times(\nabla\phi_{i,1}(\bold{p}_i), \cdots$, $\nabla\phi_{i,n}(\bold{p}_i), \bm\nu_{i}^{[1]}, \cdots$, $ \bm\nu_{i}^{[m-1]})$ in \eqref{degenarated_model} is responsible for the on-manifold robot maneuvering in Definition~\ref{def_GVF_manifold}, and $\dot{\bm\omega}_i$ represent the desired velocities of the additional $m$ virtual coordinates (i.e., last $m$ elements of $\bm\chi_i^{hgh}$ in~\eqref{ith_GVF} with $\dot{\bold{p}}_i:=[\dot{\bold{x}}_i, \dot{\bm\omega}_i ]\t=\bm\chi_i^{hgh}$), one has that the second subtask of on-manifold maneuvering for each robot is achieved if the following condition is satisfied,
\begin{align}
\label{condition_maneuvering}
\lim_{t\rightarrow\infty}\dot{\bm\omega}_{i}(t)=\lim_{t\rightarrow\infty}\dot{\bm\omega}_{k}(t)\neq\bold{0}_m, \forall i\neq k\in\mathcal V,
\end{align}
where the constant auxiliary vectors $\bm\nu_{i}^{[1]}, \cdots, \bm\nu_{i}^{[m-1]}$ need to be identical for All the robots later.

\subsection{On-Manifold Ordering-Flexible Coordination}
\label{subsec_manifold_coordination}

Recalling $\lim_{t\rightarrow\infty}\dot{\bm\omega}_{i}(t)=\lim_{t\rightarrow\infty}\dot{\bm\omega}_{k}(t)$, $\forall i\neq k\in\mathcal V$ in \eqref{condition_maneuvering}, one has that the relative virtual coordinates $\bm\omega_i, \bm\omega_k$ against each robot will become invariant, i.e.,
\begin{align*}
\lim_{t\rightarrow\infty}\bm\omega_{i}(t)-\lim_{t\rightarrow\infty}\bm\omega_{k}(t)=\bold{d}_{i,k}, \forall i\neq k\in\mathcal V,
\end{align*}
for a constant vector $\bold{d}_{i,k}\in\mathbb{R}^{m}$. By mapping $\bm\omega_i(t)$  back onto the common manifold $\mathcal{M}_i^{phy}$, i.e., $f_{i,j}(\omega_{i,j}) \rightarrow x_{i,j}, \forall i \in \mathcal{V}, j \in \mathbb{Z}_1^n$, one has that the displacement among robots $i,k$ on the manifold will maintain as well, albeit with some deformation. However, it still lacks the last element of ordering-flexible coordination.

To proceed, we introduce a virtual target robot labelled by $\ast$, governed by $\bm{\chi}_{\ast}^{hgh}$ in \eqref{ith_GVF}, already maneuvering on the common manifold $\mathcal{M}_{\ast}^{hgh}$ in \eqref{high_ith_desired_manifold} with the target virtual coordinates $\bm{\omega}_{\ast}:=[\omega_{\ast,1}, \cdots, \omega_{\ast,m}]$ and the positions $\bold{p}_{\ast}(t)$, respectively. Note that the on-manifold convergence errors of the virtual robot $\ast$ are zeros, i.e., 
$$\phi_{*,j}(\bold{p}_{\ast}(t))=0, \forall j\in\mathbb{Z}_1^n.$$
Analogous to \eqref{condition_maneuvering}, the derivative $\dot{\bm\omega}_{\ast}$ satisfies 
\begin{align}
\label{condition_target_virtual_coordinate}
\lim_{t\rightarrow\infty}\{\dot{\bm\omega}_{i}(t)-\dot{\bm\omega}_{\ast}(t)\}=\bold{0}_m, \forall i\in\mathcal V,
\end{align}
under the same vectors $\bm\nu_{\ast}^{[1]}, \cdots, \bm\nu_{\ast}^{[m-1]}$ as other robots. This implies that the relative virtual coordinates $\bm\omega_i-\bm\omega_\ast$ between robot $i$ and the virtual target robot $\ast$ are also invariant.
Then, we define the sensing neighborhood of robot $i$ via virtual coordinates below
\begin{align}
\label{sensing_neighbor}
\mathcal N_i=\{k\in\mathcal V, k\neq i~\big |~ \|\bm\omega_i-\bm\omega_k\|\leq R\},
\end{align}
where $R\in \mathbb{R}^+$ is a predefined sensing radius on the $m$ virtual coordinates $\bm\omega_i$.

According to $\bm\omega_{i}, \bm\omega_{\ast}, \mathcal N_i$ in \eqref{condition_target_virtual_coordinate} and \eqref{sensing_neighbor}, one has that the on-manifold ordering-flexible coordination can be determined by the following two conditions of the virtual coordinates,
\begin{align}
\label{condition_ordering_flexible}
&(a)~\lim_{t\rightarrow\infty}\frac{1}{N}\sum_{i=1}^N\bm\omega_i(t)-\bm\omega_{\ast}(t)=\bold{0}_m,\nonumber\\
&(b)~r<\lim_{t\rightarrow\infty}\|\bm\omega_{i}(t)-\bm\omega_{k}(t)\|<R, i\in\mathcal V, \forall k\in\mathcal N_i,
\end{align}
where $R$ is given in \eqref{sensing_neighbor} and $r\in(0, R)$ is the safe radius. The condition (a) represents the attraction of virtual coordinates to the virtual target robot, while the condition (b) indicates the repulsion of virtual coordinates among neighboring
robot. Additionally, the left side of the inequality (a): $\lim_{t\rightarrow\infty}\|\bm\omega_{i}(t)-\bm\omega_{k}(t)\|>r$ also prevents the robot overlapping (i.e., $\exists i\neq k, \bm\omega_{i}=\bm\omega_{k}$) on the manifold.

Since the ordering sequence of the coordination is not predefined and fixed in \eqref{condition_ordering_flexible} and $\mathcal N_i$ is time-varying, one has that the on-manifold ordering-flexible coordination can be achieved. Particularly, this flexible coordination in \eqref{condition_ordering_flexible} circumvents using the complicated calculated geodesic distance on the manifold.  Alternatively, we leverage the simple interaction between different virtual coordinates $\bm\omega_i$ and then map back to the manifold $\mathcal M_i^{phy}$, which can reduce communication costs but may cause challenging nonlinear coupling in convergence analysis. 



\begin{definition}
\label{def_MOFM_navigation}
({\it MOFM-Nav}) A team of robots $\mathcal V$ governed by \eqref{robot_dynamic} collectively achieve the ordering-flexible coordination while maneuvering on the desired common manifold $\mathcal M_i^{phy}, i\in\mathcal V$ governed by \eqref{ith_desired_manifold} if the following conditions are fulfilled,

\begin{itemize}
\item {\bf C1 (On-manifold convergence)}: All the robots $\mathcal V$ converge to the desired common manifold $\mathcal M_i^{phy}$, i.e., $\lim_{t\rightarrow\infty}\phi_{i,j}(\bold{p}_i(t))=0, \forall i\in\mathcal V, j\in\mathbb{Z}_1^n$ in \eqref{condition_convergence}. 

\begin{figure}[!htb]
\centering
\includegraphics[width=7cm]{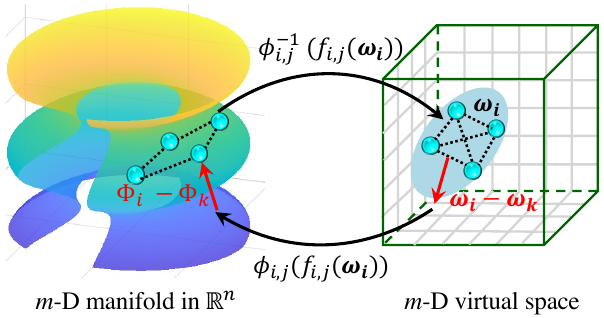}
\caption{Illustration of the core concept of {\it MOFM-Nav} via non-Euclidean metrics.  Note that the coordination of robots is first mapped via $\phi_{i,j}^{-1}(f_{i,j}(\bm\omega_i))$ and implicitly achieved in the $m$-virtual coordinate space of $\bm\omega_i$ (right part), and then mapped back via $\phi_{i,j}(f_{i,j}(\bm\omega_i))$ in the real space where the $m$-D manifold lives (left part).} 
\label{Manifold_illustration}
\end{figure}
\item {\bf C2 (On-manifold maneuvering)}: All the robots $\mathcal V$  maneuver along the desired common manifold $\mathcal M_i^{phy}$, i.e., $\lim_{t\rightarrow\infty}\dot{\bm\omega}_{i}(t)=\lim_{t\rightarrow\infty}\dot{\bm\omega}_{k}(t)\neq\bold{0}_m, \forall i\neq k\in\mathcal V$ in \eqref{condition_maneuvering}. 

\item {\bf C3 (Ordering-flexible coordination)}: All the robots coordinate the ordering-flexible pattern on the manifold via the interaction of virtual coordinates: (a) $\lim_{t\rightarrow\infty}$ ${1}/{N}\sum_{i=1}^N\bm\omega_i(t)-\bm\omega_{\ast}(t)=\bold{0}_m$, (b) $r<\lim_{t\rightarrow\infty}$ $\|\bm\omega_{i}(t)-\bm\omega_{k}(t)\|$ $<R, i\in\mathcal V, \forall k\in\mathcal N_i$ in \eqref{condition_ordering_flexible}.
\end{itemize}
\end{definition}

In Definition~\ref {def_MOFM_navigation}, {\bf C1-C2} account for the on-manifold navigation of an individual robot, while the core idea of {\bf C3} is the target-robot attraction and robot-robot repulsion reaching a balance on the virtual coordinates. An illustration of the {\it MOFM-Nav} via non-Euclidean metrics is given in Fig.~\ref{Manifold_illustration}.
%

\subsection{Problem Formulation}
Recalling the definition of on-manifold convergence errors $\phi_{i,j}(\bold{p}_i)$ which is given by \eqref{subtask_manifold_convergence}, one has that $\dot{\phi}_{i,j}(\bold{p}_i)=\nabla\phi_{i,j}(\bold{p}_i)\t\dot{\bold{p}}_i$ with $\nabla\phi_{i,j}(\bold{p}_i)$ in \eqref{gradient_phi_ij}. 
Meanwhile, let $\Phi_i(\bold{p}_i)=[\phi_{i,1}(\bold{p}_i)$, $\cdots, \phi_{i,n}(\bold{p}_i)]\t \in\mathbb{R}^{n}$ and $\bold{u}_i^{\omega}:=[u_{i,1}^{\omega}, \cdots, u_{i,m}^{\omega}]\t=\dot{\bm\omega}_i\in\mathbb{R}^m$ be the input of the virtual coordinates $\bm\omega_i$, it follows from \eqref{robot_dynamic} that the closed-loop system for robot~$i, i\in\mathcal V$ becomes

\begin{tikzpicture}[overlay, remember picture]
     \node[anchor=west] at (6.7, -2.9) {\rotatebox{180}{$\left\{ \begin{array}{c} \\ \\ \\  \\ \end{array} \right.$}};
     \node[anchor=west] at (6.7, -4.3) {\rotatebox{180}{$\left\{ \begin{array}{c} \\ \\   \end{array} \right.$}};
     \node[anchor=west] at (7.45, -2.9) {$\scriptstyle n$};
     \node[anchor=west] at (7.45, -4.3) {$\scriptstyle m$};
\end{tikzpicture}
\begin{align}
\label{GVF_dynamic}
\begin{bmatrix}
\dot{\Phi}_i(\bold{p}_i)\\
\hline
\dot{\bm\omega}_i
\end{bmatrix}=
D_i
\begin{bmatrix}
\bold{u}_i\\
\hline
\bold{u}_i^\omega
\end{bmatrix},
\end{align}
where the dynamic matrix $D_i\in\mathbb{R}^{(n+m)\times(n+m)}$ is
\begin{align}
\label{matrix_D}
D_i=
{\scriptsize \begin{bmatrix}
1 & 0 & \cdots & 0 & -\partial f_{i,1}^{[1]} & \cdots & -\partial f_{i,1}^{[m]}\\
0 & 1 & \cdots & 0 & -\partial f_{i,2}^{[1]} & \cdots & -\partial f_{i,2}^{[m]}\\
\vdots & \vdots & \ddots & \vdots &  \vdots    & \ddots & \vdots \\
0 & 0 & \cdots & 1 & -\partial f_{i,n}^{[1]} & \cdots & -\partial f_{i,n}^{[m]}\\
\hline
0 & 0 & \cdots & 0 & 1 & \cdots & 0\\
\vdots & \vdots & \ddots & \vdots &  \vdots    & \ddots & \vdots \\
0 & 0 & \cdots & 0 & 0 & \cdots & 1\\
\end{bmatrix}}
\end{align}
with $\partial f_{i,j}^{[k]}, j\in\mathbb{Z}_1^n, k\in\mathbb{Z}_1^m$ given in \eqref{eq_partial_fij}.
Now, it is ready to introduce the problem addressed in this paper.

\begin{problem}
\label{problem_ref}
Design the control inputs $\{\bold{u}_i, \bold{u}_i^\omega\}$ of each robot $i, i\in\mathcal V$ to be
\begin{align*}
\{\bold{u}_i, \bold{u}_i^\omega\}=g(\phi_{i,j}(\bold{p}_i), \partial f_{i,j}^{[1]}, \cdots,\partial f_{i,j}^{[m]}, \omega_{i,1}, \cdots, \omega_{i,m})
\end{align*}
such that the closed-loop system governed by \eqref{GVF_dynamic} achieves the {\it MOFM-Nav} (i.e., the conditions {\bf C1-C3} in Definition~\ref{def_MOFM_navigation}.
\end{problem}

To address Problem~\ref{problem_ref}, we still need some assumptions. 

\begin{itemize}

\item {\bf A1} The parametric functions in \eqref{desired_manifold} are assumed to illustrate $m$-D manifold.

\item {\bf A2} For any given $\kappa>0$ and a point  $\bold{x}_0(t)$, it holds that
$\inf\{\|\phi(\bold{x}_0(t))\|: \mbox{dist}(\bold{x}_0,\mathcal M_i^{phy})\geq \kappa\}>0, \forall i\in\mathcal V$.

\item {\bf A3} The initial positions of arbitrary two virtual coordinates satisfy $\|\bm\omega_i(0)-\bm\omega_k(0)\|> r, i\neq k\in\mathcal V$.

\item {\bf A4} The first and second partial derivatives of $f_{i,j}(\omega_{i,1}$, $\cdots, \omega_{i,m}), \forall i\in\mathcal V, j\in\mathbb{Z}_1^n$ w.r.t $m$ virtual coordinates $\omega_{i,1}, \cdots, \omega_{i,m}$ are all bounded.

\item {\bf A5} For the desired common manifold $\mathcal M_i^{phy}$, we assume that there exists a sufficiently large area such that all the robots can be displaced.
\end{itemize}

{\bf A1} assumes that the $m$-D manifold satisfying the homeomorphic property \cite{lee2010introduction} can be illustrated by the implicit functions. {\bf A2} is necessary to ensure that $\lim_{t\rightarrow\infty}\|\phi_{i,j}(\bold{p}_0(t))\|=0\Rightarrow  \lim_{t\rightarrow\infty}\mbox{dist}(\bold{p}_0(t),\mathcal M_i^{phy})=0$. {\bf A3} prevents the overlapping of robots on the manifold initially. {\bf A4} is necessary for the convergence analysis of the {\it MOFM-Nav}. {\bf A5} is reasonable and can be satisfied by shrinking the safe radius $r$. 

\section{Algorithm Design}
\label{secion_3_results}

\subsection{Feasible Auxiliary Vectors for Propagation Decoupling}

According to {\bf C3} in Definition~\ref{def_MOFM_navigation}, to achieve the ordering-flexible coordination in {\it MOFM-Nav}, we need to maintain the relative positions between neighboring virtual coordinates $\bm\omega_i-\bm\omega_k$ to be the same. This inevitably requires that the last $m$ entries of the propagation term $\times(\nabla\phi_{i,1}(\bm\xi_i), \cdots$, $\nabla\phi_{i,n}(\bm\xi_i), \bm\nu_{i}^{[1]}, \cdots, \bm\nu_{i}^{[m-1]})$ in~\eqref{ith_GVF} are decoupled to be same or constant.  However, if the auxiliary vectors $\bm\nu_{i}^{[1]}, \cdots, \bm\nu_{i}^{[m-1]}$ are not appropriately chosen, then the last $m$ entries of the propagation term in~\eqref{ith_GVF} become strongly coupled with the partial derivative of the implicit functions, which poses significant challenges to the design of the ordering-flexible coordination. Therefore, such a random auxiliary vectors $\bm\nu_{i}^{[1]}, \cdots, \bm\nu_{i}^{[m-1]}$ is ``\textbf{infeasible}" for the design of {\it MOFM-Nav}.
To enhance readers' understanding, an undesirable coupling phenomenon is explicitly shown in Example~\ref{couple_example}.


%

\begin{example}
\label{couple_example}
(Undesirable coupling of the propagation term) Given a special 3D manifold with three general gradient vectors in \eqref{gradient_phi_ij} 
\begin{align*}
&\nabla\phi_{i,1}(\bm\xi_i)=[1, 0, 0, -\partial f_{i,1}^{[1]}, -\partial f_{i,1}^{[2]}, -\partial f_{i,1}^{[3]}]\t\in\mathbb{R}^6,\nonumber\\
&\nabla\phi_{i,2}(\bm\xi_i)=[0, 1, 0, -\partial f_{i,2}^{[1]}, -\partial f_{i,2}^{[2]}, -\partial f_{i,3}^{[3]}]\t\in\mathbb{R}^6, \nonumber\\
&\nabla\phi_{i,3}(\bm\xi_i)=[0, 0, 1, -\partial f_{i,3}^{[1]}, -\partial f_{i,3}^{[2]}, -\partial f_{i,3}^{[3]}]\t\in\mathbb{R}^6,
\end{align*}
and two random auxiliary vectors $\bm\nu_{i}^{[1]}=[1, 1, 0, 0, 0, 0]\t\in\mathbb{R}^{6}, \bm\nu_{i}^{[2]}=[0, 0, 0, 0, 1, 1]\t\in\mathbb{R}^{6}$, it follows from the generalized cross product in Definition~\ref{def_cross_product} that the propagation term in~\eqref{ith_GVF} is calculated to be
\begin{align*}
&\times(\nabla\phi_{i,1}(\bm\xi_i),\nabla\phi_{i,2}(\bm\xi_i), \nabla\phi_{i,3}(\bm\xi_i), \bm\nu_{i}^{[1]},  \bm\nu_{i}^{[2]})\nonumber\\
=&[p_{i,1}, p_{i,2}, p_{i,3}, p_{i,4}, p_{i,5}, p_{i,6}]\t
\end{align*}
with 
\begin{align*}
p_{i,1}:=&\partial f_{i,1}^{[2]}\partial f_{i,2}^{[1]}-\partial f_{i,1}^{[1]}\partial f_{i,2}^{[2]}+\partial f_{i,1}^{[1]}\partial f_{i,2}^{[3]}-\partial f_{i,1}^{[3]}\partial f_{i,2}^{[1]}, \nonumber\\
p_{i,2}:=&\partial f_{i,1}^{[2]}\partial f_{i,2}^{[1]}-\partial f_{i,1}^{[1]}\partial f_{i,2}^{[2]}+\partial f_{i,1}^{[1]}\partial f_{i,2}^{[3]}-\partial f_{i,1}^{[3]}\partial f_{i,2}^{[1]}, \nonumber\\
p_{i,3}:=&\partial f_{i,1}^{[1]}\partial f_{i,3}^{[2]}-\partial f_{i,1}^{[2]}\partial f_{i,3}^{[1]}-\partial f_{i,1}^{[1]}\partial f_{i,3}^{[3]}+\partial f_{i,1}^{[3]}\partial f_{i,3}^{[1]}\nonumber\\
&+\partial f_{i,2}^{[1]}\partial f_{i,3}^{[2]}-\partial f_{i,2}^{[2]}\partial f_{i,3}^{[1]}-\partial f_{i,2}^{[1]}\partial f_{i,3}^{[3]}+\partial f_{i,2}^{[3]}\partial f_{i,3}^{[1]}, \nonumber\\
p_{i,4}:=&\partial f_{i,1}^{[2]}-\partial f_{i,1}^{[3]}+\partial f_{i,2}^{[2]}-\partial f_{i,2}^{[3]}, \nonumber\\
p_{i,5}:=&\partial f_{i,1}^{[1]}+\partial f_{i,2}^{[1]},~~p_{i,6}:=\partial f_{i,1}^{[1]}+\partial f_{i,2}^{[1]}.
\end{align*}
\end{example}

In Example~\ref{couple_example}, the last three terms $p_{i,4}, p_{i,5}, p_{i,6}$ in the propagation term $\times(\nabla\phi_{i,1}(\bm\xi_i),\nabla\phi_{i,2}(\bm\xi_i), \nabla\phi_{i,3}(\bm\xi_i)$, $\bm\nu_{i}^{[1]}, \bm\nu_{i}^{[2]})$ are strongly coupled with the different partial derivatives of the implicit functions $\partial f_{i,j}^{[l]}, j\in\mathbb{Z}_1^3, l\in\mathbb{Z}_1^3$ in \eqref{eq_partial_fij} in different dimensions. Therefore, it becomes hard to design ordering-flexible coordination via virtual coordinates. Before proceeding to the ordering-flexible coordination, the first step is to find a ``\textbf{feasible candidate}" for the vectors $\bm\nu_{i}^{[1]}, \cdots, \bm\nu_{i}^{[m-1]}$ such that the last $m$ entries of the propagation term are effectively decoupled to be same constant.

\begin{lemma}
\label{lemma_existance_nu}
For the propagation term $\times(\nabla\phi_{i,1}(\bm\xi_i), \cdots$, $ \nabla\phi_{i,n}(\bm\xi_i), \bm\nu_{i}^{[1]}, \cdots, \bm\nu_{i}^{[m-1]})$ in \eqref{ith_GVF}, there always exists a feasible solution of $m-1$ vectors $\bm\nu_{i}^{[1]}, \cdots, \bm\nu_{i}^{[m-1]}$ satisfying
\begin{align}
\label{feasible_candidate}
\bm\nu_{i}^{[1]}=&[\overbrace{0, \cdots, 0}^n, \overbrace{-1, 1, 0 , \cdots, 0}^{m}]\t\in\mathbb{R}^{n+m},\nonumber\\
\bm\nu_{i}^{[2]}=&[\overbrace{0, \cdots, 0}^n, \overbrace{-1, 0,1 , \cdots, 0}^{m}]\t\in\mathbb{R}^{n+m},\nonumber\\
\vdots~~=&~~~~~~~~~~~~~~\vdots\nonumber\\
\bm\nu_{i}^{[m-1]}=&[\overbrace{0, \cdots, 0}^n, \overbrace{-1, 0,0 , \cdots, 1}^{m}]\t\in\mathbb{R}^{n+m},
\end{align}
such that 
\begin{tikzpicture}[overlay, remember picture]
     \node[anchor=west] at (3.8, -2.0) {\rotatebox{180}{$\left\{ \begin{array}{c} \\ \\ \\  \\ \end{array} \right.$}};
     \node[anchor=west] at (3.8, -3.4) {\rotatebox{180}{$\left\{ \begin{array}{c} \\ \\   \end{array} \right.$}};
     \node[anchor=west] at (4.55, -2.0) {$\scriptstyle n$};
     \node[anchor=west] at (4.55, -3.4) {$\scriptstyle m$};
\end{tikzpicture}
\begin{align}
\label{generalized_split_form}
&\times(\nabla\phi_{i,1}(\bm\xi_i), \cdots,  \nabla\phi_{i,n}(\bm\xi_i), \bm\nu_{i}^{[1]}, \cdots, \bm\nu_{i}^{[m-1]})\nonumber\\
&=
{\scriptsize
\begin{bmatrix}
(-1)^n\big(\partial f_{i,1}^{[1]}+\partial f_{i,1}^{[2]}+\cdots+\partial f_{i,1}^{[m]}\big)\\
(-1)^n\big(\partial f_{i,2}^{[1]}+\partial f_{i,2}^{[2]}+\cdots+\partial f_{i,2}^{[m]}\big)\\
\vdots\\
(-1)^n\big(\partial f_{i,n}^{[1]}+\partial f_{i,n}^{[2]}+\cdots+\partial f_{i,n}^{[m]}\big)\\
\hline
(-1)^n\\
\vdots\\
(-1)^n
\end{bmatrix}}~~~~\in\mathbb{R}^{n+m}, 
\end{align}
where the terms $\nabla\phi_{i,j}(\bm\xi_i), \partial f_{i,j}^{[k]}, j\in\mathbb{Z}_1^n, k\in\mathbb{Z}_1^m$ are given in \eqref{gradient_phi_ij} and \eqref{eq_partial_fij}, respectively.
\end{lemma}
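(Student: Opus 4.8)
The plan is to verify \eqref{generalized_split_form} by direct computation of the generalized cross product in Definition~\ref{def_cross_product}, exploiting the very sparse structure of the candidate vectors in \eqref{feasible_candidate}. Stack the $n+m-1$ row vectors $\nabla\phi_{i,1}(\bm\xi_i)\t, \dots, \nabla\phi_{i,n}(\bm\xi_i)\t, (\bm\nu_{i}^{[1]})\t, \dots, (\bm\nu_{i}^{[m-1]})\t$ into a matrix $M\in\mathbb{R}^{(n+m-1)\times(n+m)}$, so that by Definition~\ref{def_cross_product} the $s$-th entry of the cross product equals $(-1)^{s-1}\det(M_s)$, where $M_s$ is $M$ with its $s$-th column deleted. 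The key observation is that the first $n$ rows of $M$ have the block form $[\,I_n \mid -F\,]$ with $F=(\partial f_{i,j}^{[l]})_{j\in\mathbb{Z}_1^n, l\in\mathbb{Z}_1^m}$, and the last $m-1$ rows have the block form $[\,\mathbf{0}_{(m-1)\times n}\mid L\,]$, where $L\in\mathbb{R}^{(m-1)\times m}$ has first column $-\mathbf{1}_{m-1}$ and the trailing $(m-1)\times(m-1)$ block equal to $I_{m-1}$.

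First I would handle the two cases $s\le n$ and $s>n$ separately. For $s\le n$: deleting column $s$ leaves the first-$n$-rows block as $[\,I_n^{(s)}\mid -F\,]$ where $I_n^{(s)}$ is $I_n$ with column $s$ removed; expanding $\det(M_s)$ by cofactors along the rows with the single surviving $1$'s in the identity part reduces the determinant to $(-1)^{?}$ times the determinant of an $m\times m$ matrix whose top row is $-(\partial f_{i,s}^{[1]}, \dots, \partial f_{i,s}^{[m]})$ (the $s$-th row of $-F$, the only $\phi$-row not consumed by the reduction) and whose bottom $m-1$ rows are exactly $L$. I then compute $\det\begin{bmatrix}-(\partial f_{i,s}^{[1]}, \dots, \partial f_{i,s}^{[m]})\\ L\end{bmatrix}$: doing column operations that add every trailing column to the first column, $L$'s first column becomes $-\mathbf{1}_{m-1}+\sum_{\text{rest}} e_j = \mathbf{0}_{m-1}$ except the structure gives $0$'s, so the determinant collapses to $\pm$ the sum $\partial f_{i,s}^{[1]}+\cdots+\partial f_{i,s}^{[m]}$, and tracking the sign bookkeeping together with the prefactor $(-1)^{s-1}$ yields $(-1)^n\big(\partial f_{i,s}^{[1]}+\cdots+\partial f_{i,s}^{[m]}\big)$, matching the $s$-th entry of \eqref{generalized_split_form}. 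For $s=n+l$ with $l\in\mathbb{Z}_1^m$: deleting that column kills one of the $\omega$-columns; now the first $n$ rows still contain the full $I_n$, so cofactor expansion along them reduces $\det(M_s)$ to $\pm\det(L^{(l)})$ where $L^{(l)}$ is $L$ with column $l$ removed — and $L^{(l)}$ is an $(m-1)\times(m-1)$ matrix that is either $I_{m-1}$ (if $l=1$, up to a sign from the $-\mathbf{1}$ column being gone) or, for $l\ge 2$, a matrix whose determinant one checks equals $\pm 1$; combined with $(-1)^{s-1}=(-1)^{n+l-1}$ this gives the constant $(-1)^n$ in every one of the last $m$ slots.

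The main obstacle I anticipate is purely the sign bookkeeping: there are three interacting sources of signs — the $(-1)^{s-1}$ in Definition~\ref{def_cross_product}, the cofactor-expansion signs when collapsing the $I_n$ block (which depend on $s$ in case $s\le n$), and the sign of the reduced $m\times m$ (resp.\ $(m-1)\times(m-1)$) determinant after the column operations. I would pin these down cleanly by first doing the case $n=2, m=2$ and $n=2, m=3$ explicitly as a sanity check against Example~\ref{couple_example}'s structure, then arguing the general pattern by an explicit permutation-sign count: moving the deleted column past the appropriate blocks contributes a controlled power of $-1$, and since the same block reshuffling happens for every index $s$, the exponent telescopes to the common value $n$ shown on the right-hand side of \eqref{generalized_split_form}. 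Once the signs are fixed, the statement follows; no smoothness or genericity hypotheses are needed here since \eqref{feasible_candidate} is an explicit algebraic identity valid for all values of the partial derivatives.
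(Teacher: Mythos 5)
Your proposal is correct and takes essentially the same route as the paper: both evaluate the cross product entry-by-entry by deleting the $s$-th column of the stacked matrix with block form $\left[\begin{smallmatrix} I_n & -F\\ \mathbf{0} & L\end{smallmatrix}\right]$ and computing the resulting determinant via cofactor expansion and block-triangularity, splitting into the cases $s\le n$ and $s>n$. The only difference is cosmetic --- you collapse the identity block first and finish with column operations on the reduced $m\times m$ determinant, while the paper expands along the surviving zero-padded $\phi$-row and evaluates the minors $H_k^{[4]}$ by row swaps --- and both bookkeeping schemes land on the common sign $(-1)^n$.
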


\begin{proof}
For the readers' convenience, we divide the proof into two cases. 

{\textbf{Case 1}: $m=1$.}
If Case 1 holds, it is straightforward that there are no additional auxiliary vectors $\bm\nu_{i}^{[1]}, \cdots, \bm\nu_{i}^{[m-1]}$, which implies that the propagation term in \eqref{ith_GVF} degenerates to be $\times(\nabla\phi_{i,1}(\bm\xi_i), $ $\cdots, \nabla\phi_{i,n}(\bm\xi_i))$ with 
\begin{align}
\label{first_orderi_gradient_phi}
\nabla\phi_{i,j}(\bm\xi_i)=[0, \cdots, 1, \cdots, -\partial f_{i,j}^{[1]} ]\t\in\mathbb{R}^{n+1}, j\in\mathbb{Z}_1^n.
\end{align}
Then, it follows from \eqref{first_orderi_gradient_phi} and the generalized cross product~in Definition~\ref{def_cross_product} that
\begin{align}
\label{first_m_propergation}
&\times(\nabla\phi_{i,1}(\bm\xi_i), \cdots, \nabla\phi_{i,n}(\bm\xi_i))\nonumber\\
=&\sum_{j=1}^{n+1}(-1)^{j-1}\mathrm{det}(G_j)\bold{b}_j,
\end{align}
where $\bold{b}_j=[0, \cdots, 1, \cdots, 0]\t\in\mathbb{R}^{n+1}$ represents the basis column vector, and $\mathrm{det}(G_j), j\in\mathbb{Z}_{1}^{n+1}$ represents the determinants of the submatrices $G_j\in\mathbb{R}^{n\times n},$ by deleting $j$-th column of the matrix of $[\nabla\phi_{i,1}(\bm\xi_i), \cdots, \nabla\phi_{i,n}(\bm\xi_i]\t\in\mathbb{R}^{n\times (n+1)}$ in Definition~\ref{def_cross_product}. 
According to the special structure of $[\nabla\phi_{i,1}(\bm\xi_i), \cdots$, $\nabla\phi_{i,n}(\bm\xi_i]\t$, the submatrices $G_j, j\in\mathbb{Z}_1^{n+1}$ in \eqref{first_m_propergation} can be divided into two subgroups of $j\in\mathbb{Z}_1^n$ and $j=n+1$ below,
\begin{align}
\label{G_j_extention}
G_j=&
{\scriptsize
\begin{bmatrix}
1 & 0 & 0 & \cdots & 0 & -\partial f_{i,1}^{[1]}\\
\vdots & \vdots & \vdots & \ddots & \vdots & \vdots\\
\hline
0 & \cdots &  0& \cdots & 0 & -\partial f_{i,j}^{[1]}\\
\hline
\vdots & \vdots & \vdots & \ddots & \vdots & \vdots\\
0 & 0 & 0 & \cdots & 1 & -\partial f_{i,n}^{[1]}\\
\end{bmatrix}}~~~~~, 
G_{n+1}=I_n.
\end{align}
\begin{tikzpicture}[overlay, remember picture]
    \node[anchor=west] at (5.2, 1.45) {$\scriptstyle j$-$\scriptstyle\mathrm{th}$};
\end{tikzpicture}

Using the Laplace expansion \cite{mirsky2012introduction} along $j$-th row of $G_j, j\in\mathbb{Z}_1^n$, one has that the determinants $\mathrm{det}(G_j)$ in \eqref{G_j_extention} becomes
\begin{align}
\label{G_j_extention_2}
\mathrm{det}(G_j)=(-1)^{n+j+1}\partial f_{i,j}^{[1]}, j\in\mathbb{Z}_1^{n},~\mathrm{det}(G_{n+1})=1.
\end{align}
Substituting \eqref{G_j_extention_2} into \eqref{first_m_propergation} yields 
\begin{align}
\label{first_m_propergation_2}
&\times(\nabla\phi_{i,1}(\bm\xi_i), \cdots, \nabla\phi_{i,n}(\bm\xi_i))\nonumber\\
=&\sum_{j=1}^{n}(-1)^{n+2j}\partial f_{i,j}^{[1]}\bold{b}_j+(-1)^{n}\bold{b}_{n+1}\nonumber\\
=&[(-1)^n\partial f_{i,1}^{[1]}, \cdots, (-1)^n\partial f_{i,n}^{[1]}, (-1)^n ]\t\in\mathbb{R}^{n+1}
\end{align}
with $(-1)^{n+2j}=(-1)^{n}, \forall j\in\mathbb{Z}_1^n$ and $\bold{b}_j$ given in \eqref{first_m_propergation}. Therefore, the decomposition of \eqref{first_m_propergation_2} naturally adheres to the generalized form in \eqref{generalized_split_form}.

{\textbf{Case 2}: $m\geq2, m\in\mathbb{Z}^{+}$.}
If Case 2 holds, one has that $m-1$ auxiliary vectors $\bm\nu_{i}^{[1]}, \cdots, \bm\nu_{i}^{[m-1]}$ are required. 
Let $P_i\in\mathbb{R}^{n+m}, i\in\mathcal V,$ denote the $i$-th propagation term in \eqref{ith_GVF} below
\begin{align} 
\label{P_i_propergation}
P_i:=&\times Q_i,
\end{align}
with the $i$-th augmented matrix $Q_i\in\mathbb{R}^{(n+m-1)\times(n+m)}$ being
\begin{align}
\label{P_i_propergation_1}
Q_i=[\nabla\phi_{i,1}(\bm\xi_i), \cdots, \nabla\phi_{i,n}(\bm\xi_i), \bm\nu_{i}^{[1]}, \cdots, \bm\nu_{i}^{[m-1]}]\t
\end{align}
for conciseness. By substituting the feasible solution \eqref{feasible_candidate} into the propagation term $P_i$ in \eqref{P_i_propergation}, it follows from \eqref{gradient_phi_ij} and the generalized cross product in \cite{galbis2012vector} that 
\begin{align}
\label{m_propergation}
P_i=\sum_{j=1}^{n+m}(-1)^{j-1}\mathrm{det}(G_j)\bold{b}_j, 
\end{align}
where $\mathrm{det}(G_j)$ denote 
the determinant of $G_j\in\mathbb{R}^{(n+m-1)\times(n+m-1)}, j\in\mathbb{Z}_1^{n+m}$ by deleting the $j$-th column of the matrix $Q_i$ in \eqref{P_i_propergation_1}.

Moreover, according to the special structure of $Q_i$ in \eqref{P_i_propergation_1}, the submatrices $G_j, j\in\mathbb{Z}_1^{n+m-1},$ in \eqref{m_propergation} can also be divided into two subgroups for convenience, namely, $~j\in\mathbb{Z}_1^n$ and $j\in\mathbb{Z}_{n+1}^{n+m-1}$, respectively.


\textbf{Subgroup (i):} For the subgroup of $~j\in\mathbb{Z}_1^n$, one has that $G_j\in\mathbb{R}^{(n+m-1)\times(n+m-1)}$ in \eqref{m_propergation} becomes
\begin{tikzpicture}[overlay, remember picture]
    \node[anchor=west] at (-1.2, -2.90) { $\scriptstyle j$-$\scriptstyle\mathrm{th}$};
    \node[anchor=west] at (-5.80, -2.05) {$\left\{ \begin{array}{c} \\ \\ \\ \\ \\ \end{array} \right.$};
     \node[anchor=west] at (-5.80, -3.75) {$\Bigg\{$};
     \node[anchor=west] at (-6.10, -3.75) {$\scriptstyle m$};
     \node[anchor=west] at (-6.30, -2.05) {$\scriptstyle n-1$};
     \node[anchor=west] at (-5.4, -0.9) {\rotatebox{270}{$\left\{ \begin{array}{c} \\ \\ \\  \\ \\ \\ \end{array} \right.$}};
     \node[anchor=west] at (-2.6, -0.9) {\rotatebox{270}{$\left\{ \begin{array}{c} \\  \\ \\ \\ \\ \\ \\  \\ \\ \\ \end{array} \right.$}};
     \node[anchor=west] at (-4.4, -0.45) {$\scriptstyle n-1$};
     \node[anchor=west] at (-0.6, -0.45) {$\scriptstyle m$};
\end{tikzpicture}
\begin{align}
\label{former_G_j}
G_j=&\nonumber\\
&
{\scriptsize \begin{bmatrix}
1 & \cdots & 0 & \cdots & 0 & -\partial f_{i,1}^{[1]} & -\partial f_{i,1}^{[2]} & \cdots & -\partial f_{i,1}^{[m]}\\
\vdots & \ddots & \vdots & \ddots & \vdots & \vdots & \vdots & \ddots & \vdots\\
\hline
0 & \cdots &  0& \cdots & 0 & -\partial f_{i,j}^{[1]} & -\partial f_{i,j}^{[2]} & \cdots & -\partial f_{i,j}^{[m]}\\
\hline
\vdots & \ddots & \vdots & \ddots & \vdots & \vdots & \vdots & \ddots & \vdots\\
0 & \cdots & 0 & \cdots & 1 & -\partial f_{i,n}^{[1]} & -\partial f_{i,n}^{[2]} & \cdots & -\partial f_{i,n}^{[m]}\\
0 & \cdots & 0 & \cdots & 0 & -1 & 1 & \cdots & 0\\
\vdots & \ddots & \vdots & \ddots & \vdots & \vdots & \vdots & \ddots & \vdots\\
0 & \cdots & 0 & \cdots & 0 & -1 & 0 & \cdots & 1\\
\end{bmatrix}}.
\quad
\end{align}

Using the Laplace expansion \cite{mirsky2012introduction} along $j$-th row of $G_j$ in~\eqref{former_G_j}, one has that
\begin{align}
\label{former_G_j_2}
\mathrm{det}(G_j)=&\sum_{k=1}^{m}(-1)^{n+j+k}\partial f_{i,j}^{[k]}\mathrm{det}(H_k),
\end{align}
where $\mathrm{det}(H_k)$ is the minor determinant of the $k$-th matrix $H_k\in\mathbb{R}^{(n+m-2)\times(n+m-2)}$ by removing  
the $j$-th row and $(k+n)$-th column of the matrix $G_i$ in \eqref{former_G_j}, which can be formulated to be
\begin{align}
\label{subblock_H}
H_k=\begin{bmatrix}
H_k^{[1]}  &  H_k^{[2]}\\
H_k^{[3]} & H_k^{[4]}
\end{bmatrix}.
\end{align}
Here, the submatrices $H_k^{[1]}\in\mathbb{R}^{(n-1)\times(n-1)}, H_k^{[2]}\in\mathbb{R}^{(n-1)\times(m-1)}, $ $H_k^{[3]}\in\mathbb{R}^{(m-1)\times(n-1)}, H_k^{[4]}\in\mathbb{R}^{(m-1)\times(m-1)}$ are four sub-blocks of the matrix $G_j$ in \eqref{former_G_j} after removing the $j$-th row and $(k+n)$-th column.

For $k=1$, one has that $H_1^{[1]}=I_{n-1}, H_1^{[3]}=\bold{0}_{(m-1)\times (n-1)}$, $ H_1^{[4]}=I_{m-1}$
with $I_{n-1}, I_{m-1}$ and $\bold{0}_{(m-1)\times (n-1)}$ being the $(n-1)$-, $(m-1)$-dimensional identity matrices and $(m-1)\times(n-1)$ zero matrix, respectively, which follows from \eqref{subblock_H} that
\begin{align}
\label{H1_subblock}
H_1=
\begin{bmatrix}
I_{n-1} & H_1^{[2]}\\
\bold{0}_{(m-1)\times(n-1)} & I_{m-1}
\end{bmatrix}.
\end{align}
Accordingly, one has that 
\begin{align}
\label{H1_determinant}
\mathrm{det}(H_1)=\mathrm{det}(I_{n-1})\mathrm{det}(I_{m-1})=1.
\end{align}
For $k\in\mathbb{Z}_2^{m}$, one has that $H_k^{[1]}=I_{n-1}, H_k^{[3]}=\bold{0}_{(m-1)\times (n-1)}$ as well, which then follows from \eqref{subblock_H} that
\begin{align}
\label{Hk_subblock}
H_k=
\begin{bmatrix}
I_{n-1} & H_k^{[2]}\\
\bold{0}_{(m-1)\times(n-1)} & H_k^{[4]}
\end{bmatrix}
\end{align}
with $H_k^{[4]}\in\mathbb{R}^{(m-1)\times (m-1)}$ being
\begin{tikzpicture}[overlay, remember picture]
    \node[anchor=west] at (1.2, -2.05) {$\scriptstyle k-1$-$\scriptstyle\mathrm{th}$};
\end{tikzpicture}
\begin{align}
\label{matrix_D_k}
H_k^{[4]}=
\begin{bmatrix}
-1 & 1 &  0 & \cdots & 0 & \cdots & 0\\
-1 & 0 & 1& \cdots & 0 & \cdots & 0\\
\vdots & \vdots & \vdots & \ddots & \vdots & \vdots & \vdots\\
\hline
-1 & 0 & 0 & \cdots & 0 & \cdots & 0 \\
\hline
\vdots & \vdots & \vdots & \ddots & \vdots & \vdots & \vdots\\
-1 & 0 & 0  & \cdots & 0 & \cdots & 1 \\
\end{bmatrix}~~~~~~~~.
\end{align}
Analogous to the determinant of the block matrices in \eqref{H1_determinant}, it, together with \eqref{Hk_subblock}, gives that
\begin{align}
\label{Hk_determinant}
\mathrm{det}(H_k)=\mathrm{det}(I_{n-1})\mathrm{det}(H_k^{[4]})=\mathrm{det}(H_k^{[4]}).
\end{align}
Since $H_k^{[4]}, \forall k\in\mathbb{Z}_2^m$ contain the same rows but with different orderings, it follows from the swapping property of the determinant \cite{mirsky2012introduction} that 
\begin{align}
\label{swap_property}
\mathrm{det}(H_k^{[4]})=-\mathrm{det}(H_{k-1}^{[4]})=\cdots=(-1)^{k-2}\mathrm{det}(H_{2}^{[4]})
\end{align}
with 
\begin{align}
\label{matrix_D_k_2}
H_2^{[4]}=
\begin{bmatrix}
-1 & 0 &  \cdots & 0\\
-1 & 1 &  \cdots & 0\\
\vdots &  \vdots & \ddots & \vdots \\
-1 & 0  &  \cdots & 1 \\
\end{bmatrix}.
\end{align}
Substituting \eqref{swap_property} and \eqref{matrix_D_k_2} into \eqref{Hk_determinant} yields
\begin{align}
\label{Hk_determinant_1}
\mathrm{det}(H_k)=(-1)^{k-2}\mathrm{det}(H_{2}^{[4]})=(-1)^{k-1}, \forall k\in\mathbb{Z}_2^m.
\end{align}
By formulating $\mathrm{det}(H_1)$ in \eqref{H1_determinant} and $\mathrm{det}(H_k), k\in\mathbb{Z}_2^{m}$ in \eqref{Hk_determinant_1} together, one has that
\begin{align}
\label{Hk_determinant_all}
\mathrm{det}(H_k)=(-1)^{k-1}, \forall k\in\mathbb{Z}_1^m.
\end{align}
By substituting \eqref{Hk_determinant_all} into \eqref{former_G_j_2}, we obtain
\begin{align}
\label{former_G_j_2_first}
\mathrm{det}(G_j)=&\sum_{k=1}^{m}(-1)^{n+j+2k-1}\partial f_{i,j}^{[k]}\nonumber\\
=&\sum_{k=1}^{m}(-1)^{n+j-1}\partial f_{i,j}^{[k]}, \forall j\in\mathbb{Z}_1^n.
\end{align}

\textbf{Subgroup (ii):} For the the subgroup of $~j\in\mathbb{Z}_{n+1}^{n+m}$, 
analogous to the subblocks in \eqref{subblock_H}, the matrix $G_j\in\mathbb{R}^{(n+m-1)\times(n+m-1)}$ in \eqref{m_propergation} becomes
\begin{align}
\label{subblock_G_j}
G_j=\begin{bmatrix}
G_j^{[1]} & G_j^{[2]} \\
G_j^{[3]}  & G_j^{[4]} 
\end{bmatrix}, j\in\mathbb{Z}_{n+1}^{n+m},
\end{align}
where $G_j^{[1]}\in\mathbb{R}^{n\times n}, G_j^{[2]}\in\mathbb{R}^{n\times (m-1)}, G_j^{[3]}\in\mathbb{R}^{(m-1)\times n}$, $G_j^{[4]}\in\mathbb{R}^{(m-1)\times (m-1)}$. 
According to the dimensionalities of these four submatrices, one has that $G_j^{[1]}=I_n, G_j^{[3]}=\bold{0}_{(m-1)\times n}$, 
and $G_j^{[4]}$ is the same structure as $H_k^{[4]}$ in \eqref{matrix_D_k}, which follows from \eqref{subblock_G_j} that
\begin{align}
\label{subblock_G_j_2}
G_j=\begin{bmatrix}
I_{n} & G_j^{[2]} \\
\bold{0}_{(m-1)\times n} & G_j^{[4]} 
\end{bmatrix}, j\in\mathbb{Z}_{n+1}^{n+m}.
\end{align}
Using the determinant of the block matrices in \eqref{Hk_determinant}, it follows from \eqref{subblock_G_j_2} that
\begin{align}
\label{determinant_G_j}
\mathrm{det}(G_j)=\mathrm{det}(I_{n})\mathrm{det}(G_j^{[4]})=\mathrm{det}(G_j^{[4]}).
\end{align}
Since the structure of $G_j^{[4]}$ is the same to $H_k^{[4]}$ when $j=k+n, j\in\mathbb{Z}_{n+1}^{n+m}$, it follows from \eqref{swap_property} that 
\begin{align}
\label{determinant_G_j_2_latter}
\mathrm{det}(G_j)=(-1)^{j-1-n}, j\in\mathbb{Z}_{n+1}^{n+m}.
\end{align}
Substituting $\mathrm{det}(G_j)$ in \eqref{former_G_j_2_first} and \eqref{determinant_G_j_2_latter} of \textbf{Subgroups (i)-(ii)} into the propergation term $P_i$ in \eqref{m_propergation} yields
\begin{align}
\label{new_p_i}
P_i=&\sum_{j=1}^{n}(-1)^{n+2j-2}\sum_{k=1}^{m}\partial f_{i,j}^{[k]}\bold{b}_j + \sum_{j=n+1}^{n+m}(-1)^{2j-2-n}\bold{b}_j\nonumber\\
=&\sum_{j=1}^{n}\sum_{k=1}^{m}(-1)^{n}\partial f_{i,j}^{[k]}\bold{b}_j+ \sum_{k=1}^{m}(-1)^{n}\bold{b}_{n+k},
\end{align}
where $k=j-n,~\mathrm{if}~j\in\mathbb{Z}_{n+1}^{n+m}$ for the second term. From the definition of $\bold{b}_{j}$ in~\eqref{first_m_propergation}, the propagation term $P_i$ in \eqref{P_i_propergation} is formulated to be
\begin{align}
\label{generalized_split_form_2}
P_i=&\bigg[\overbrace{(-1)^n\sum_{k=1}^m\partial f_{i,1}^{[k]}, \cdots, (-1)^n\sum_{k=1}^m\partial f_{i,n}^{[k]}}^n,\nonumber\\
&\underbrace{(-1)^n, \cdots, (-1)^n}_{m} \bigg]\t\in\mathbb{R}^{n+m},
\end{align}
which aligns with \eqref{generalized_split_form}. The proof is thus completed.
\end{proof}

\subsection{CGVF Controller Design}
From Lemma~\ref{lemma_existance_nu}, it follows from~\eqref{condition_target_virtual_coordinate}, \eqref{GVF_dynamic}, \eqref{generalized_split_form}  that the derivatives of the target virtual coordinates $\dot{\bm\omega_{\ast}}:=\bold{u}_{\ast}^{\omega}$ become
\begin{align}
\label{derivative_target_omega}
\dot{\bm\omega}_{\ast}=[ (-1)^n, \cdots, (-1)^n]\t\in\mathbb{R}^{m}.
\end{align} 
Meanwhile, since the convergence term $-\sum_{j=1}^n k_{i,j}\phi_{i,j}(\bm\xi_i)$ $ \nabla\phi_{i,j}(\bm\xi_i)$ in \eqref{ith_GVF} of Definition~\ref{def_GVF_manifold} can be split into 
\begin{align}
\label{split_second_term}
&-\sum_{j=1}^n k_{i,j}\phi_{i,j}(\bm\xi_i) \nabla\phi_{i,j}(\bm\xi_i)\nonumber\\
=&\bigg[\overbrace{-k_{i,1}\phi_{i,1}(\bm\xi_i), \cdots, -k_{i,n}\phi_{i,n}(\bm\xi_i)}^n,\nonumber\\
&\underbrace{\sum_{j=1}^n k_{i,j}\phi_{i,j} \partial f_{i,j}^{[1]}, \cdots, \sum_{j=1}^n k_{i,j}\phi_{i,j} \partial f_{i,j}^{[m]}}_m\bigg]\t\in\mathbb{R}^{n+m}.
\end{align}
By adding \eqref{generalized_split_form_2}, \eqref{split_second_term}, and including extra terms to account for the target attraction and neighboring repulsion in $-c_i(\omega_{i,l}-\omega_{\ast,l})-\sum_{k\in\mathcal N_i(t)}$ and $\alpha(\|\bm\omega_{i}-\bm\omega_{k}\|){(\omega_{i,l}-\omega_{i,k})}/{\|\bm\omega_{i}-\bm\omega_{k}\|}$, respectively,
the CGVF control algorithm $\bold{u}_i, \bold{u}_i^{\omega}$ for robot $i, i\in\mathcal V,$ is redesigned below
\begin{align}
\label{MOFM_navigation}
u_{i,j}=&(-1)^n\sum_{l=1}^{m}\partial f_{i,j}^{[l]}-k_{i,j}\phi_{i,j}, j\in\mathbb{Z}_1^n,\nonumber\\
u_{i,l}^{\omega}=&(-1)^n+\sum_{j=1}^n k_{i,j}\phi_{i,j} \partial f_{i,j}^{[l]}-c_i(\omega_{i,l}-\omega_{\ast,l})\nonumber\\
&+\sum_{k\in\mathcal N_i(t)}\alpha(\|\bm\omega_{i}-\bm\omega_{k}\|)\frac{(\omega_{i,l}-\omega_{i,k})}{\|\bm\omega_{i}-\bm\omega_{k}\|}, l\in\mathbb{Z}_1^m,
\end{align}
where $u_{i,j}\in\mathbb{R}, u_{i,l}^{\omega}\in\mathbb{R}$ are the $j$-th, and $l$-th entries of the inputs $\bold{u}_i$ and $\bold{u}_i^{\omega}$ in \eqref{GVF_dynamic}, respectively, $\omega_{\ast, l}, l\in\mathbb{Z}_1^m$ denotes the $l$-th term of the target virtual coordinates $\bm\omega_{\ast}$
in \eqref{derivative_target_omega}, $k_{i,j}\in\mathbb{R}^{+}$ are the control gains, $\phi_{i,j}$ and $\partial f_{i,j}^{[l]}$ are given~in \eqref{ith_GVF} and \eqref{eq_partial_fij} for conciseness. 
Here, the term $\sum_{k\in\mathcal N_i(t)}\alpha(\|\bm\omega_{i}-\bm\omega_{k}\|)(\omega_{i,l}-\omega_{i,k})/\|\bm\omega_{i}$ $-\bm\omega_{k}\|$ represents a repulsion term among neighboring virtual coordinates with robot $i$, $\mathcal N_i(t)$ is a sensing neighbor set of time $t$ in \eqref{sensing_neighbor}, and $\alpha(s): (r, +\infty)\rightarrow[0, +\infty)$ is a twice continuously differentiable potential function satisfying the following three properties \cite{hu2023cooperative}
\begin{align}
\label{alpha_function}
&i)~\alpha(s)\,\text{decreases}~\mathrm{if}~s\,\text{increases},~\forall s\in(r, R),~\nonumber\\
&ii) \lim_{s\rightarrow r^{+}}\alpha(s)=+\infty,~iii)~\alpha(s)=0, \forall s\in[R, +\infty),
\end{align}
with $r, R$ being the safe and sensing radii in \eqref{condition_ordering_flexible} and $r^{+}$, being the right limit of $r$. 
According to the aforementioned properties i)-iii) in \eqref{alpha_function}, an illustrative example is chosen below \cite{hu2024coordinated}, 
\begin{align}
\label{alpha_example}
&\alpha(s)=\frac{(s-R)^2}{(s-r)^2},~\mathrm{if}~s\in(r, R],~\alpha(s)=0,~\mathrm{if}~s\in(R, +\infty),
\end{align}
where the derivative of $\alpha(s)$ in \eqref{alpha_example} w.r.t $s$ becomes 
\begin{align*}
&\frac{\partial \alpha(s)}{\partial s}=\frac{2(s-R)(R-r)}{(s-r)^3},~\mathrm{if}~s\in(r, R],\nonumber\\
&\frac{\partial \alpha(s)}{\partial s}=0,~\mathrm{if}~s\in(R, +\infty).
\end{align*}
Hence, $({\partial \alpha(s)}/{\partial s})$ is continuous at $R$ as well. The continuity of $\alpha(s), ({\partial \alpha(s)}/{\partial s})$ will be utilized in the convergence analysis in Lemma~\ref{lemma_manifold_convergence} later.

\section{Convergence Analysis}
\label{sec_convergence}
Due to the special design of the repulsion term $\sum_{k\in\mathcal N_i(t)}\alpha(\|\bm\omega_{i}-\bm\omega_{k}\|){(\omega_{i,l}-\omega_{i,k})}/{\|\bm\omega_{i}-\bm\omega_{k}\|}$ in \eqref{MOFM_navigation}, it follows from \eqref{alpha_function} that the proposed CGVF controller~\eqref{MOFM_navigation} is not well-defined under the situations of $\|\bm\omega_{i}-\bm\omega_{k}\|=0$ or $\|\bm\omega_{i}-\bm\omega_{k}\|\leq r$, i.e., 
\begin{align}
\label{not_well_define}
&\|\bm\omega_{i}-\bm\omega_{k}\|\leq r\Rightarrow u_{i,l}^{\omega}=+\infty, \forall l\in\mathbb{Z}_1^m.
\end{align}
This implies that the proposed CGVF controller~\eqref{MOFM_navigation} may fail to work under \eqref{not_well_define}. Before moving to the detailed convergence analysis, the first step is to 
exclude all the ill-defined points of $\|\bm\omega_{i}(t)-\bm\omega_{k}(t)\|\leq r, \forall t>0$, during the evolution process.

\begin{lemma}
\label{lemma_well_define}
Under the assumption of~{\bf A2}, all the robots~$\mathcal V$ governed by the dynamics \eqref{robot_dynamic} and the proposed CGVF controller~\eqref{MOFM_navigation} satisfies
\begin{align}
\label{well_define_condition}
\|\bm\omega_{i}(t)-\bm\omega_{k}(t)\|>r, \forall t>0, i\in\mathcal V, k\in\mathcal N_i,
\end{align}
all along such that the ill-defined points of the virtual coordinates $\|\bm\omega_{i}-\bm\omega_{k}\|\leq r$ in~\eqref{not_well_define} are prevented. Here, $\bm\omega_{i}(t), \bm\omega_{k}(t)$ are the virtual coordinates of robots $i$ and $k$, respectively, $\mathcal N_i$ is the sensing neighboring given in \eqref{sensing_neighbor}.
\end{lemma}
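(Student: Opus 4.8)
The plan is to show that the set where the controller is well-defined, namely $\{\|\bm\omega_i-\bm\omega_k\|>r\}$, is forward invariant for the closed-loop dynamics. By \textbf{A3}, the initial configuration lies strictly inside this set, i.e.\ $\|\bm\omega_i(0)-\bm\omega_k(0)\|>r$ for all $i\neq k$. Assume, for contradiction, that some pair of virtual coordinates reaches the boundary: let $t^\ast=\inf\{t>0:\ \exists\, i,k,\ \|\bm\omega_i(t)-\bm\omega_k(t)\|=r\}$ and suppose $t^\ast<\infty$. On $[0,t^\ast)$ the controller is well-defined, so the solution exists and is (at least) continuously differentiable there. The idea is to build a Lyapunov-type (energy) function whose finiteness on $[0,t^\ast)$ is incompatible with any pair distance shrinking down to $r$, because the repulsion potential $\alpha$ blows up as $s\to r^+$.

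First I would introduce the candidate function
\begin{align}
\label{eq_lyap_welldef}
W(t)=\tfrac12\sum_{i=1}^N\sum_{j=1}^n\phi_{i,j}(\bold{p}_i(t))^2
 +\tfrac{c}{2}\sum_{i=1}^N\|\bm\omega_i(t)-\bm\omega_\ast(t)\|^2
 +\tfrac12\sum_{i=1}^N\sum_{k\in\mathcal N_i(t)}\!\!\Psi(\|\bm\omega_i(t)-\bm\omega_k(t)\|),
\end{align}
where $\Psi'(s)=-\alpha(s)$ is a primitive of the repulsion magnitude, chosen so that $\Psi(s)\to+\infty$ as $s\to r^+$ (this is possible because $\alpha(s)\sim(s-r)^{-2}$ near $r$ in the example \eqref{alpha_example}, so $\Psi$ diverges; more generally property ii) in \eqref{alpha_function} together with a mild integrability-at-infinity-of-$1/\alpha$-type condition — or simply using the explicit $\alpha$ — gives divergence of the primitive). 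Next I would differentiate $W$ along the closed-loop trajectory generated by \eqref{robot_dynamic} and \eqref{MOFM_navigation}. Using $\dot\phi_{i,j}=\nabla\phi_{i,j}\t\dot{\bold{p}}_i$ and the structure of $D_i$ in \eqref{matrix_D}, the on-manifold term contributes $-\sum_{i,j}k_{i,j}\phi_{i,j}^2$ after the propagation part cancels (it is orthogonal to the gradients by construction, Definition~\ref{def_GVF_manifold}); the $c$-term and the $\Psi$-term are arranged precisely so that the target-attraction and neighbor-repulsion feedback in $u_{i,l}^\omega$ appear as gradient-descent directions, yielding $\dot W\le 0$, hence $W(t)\le W(0)<\infty$ for all $t\in[0,t^\ast)$. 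Since $W$ is a finite sum of nonnegative terms, each term stays bounded: in particular $\Psi(\|\bm\omega_i(t)-\bm\omega_k(t)\|)\le W(0)$ for every neighbor pair on $[0,t^\ast)$.

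Then I would derive the contradiction. By definition of $t^\ast$, there is a pair $(i,k)$ and times $t_\ell\uparrow t^\ast$ with $\|\bm\omega_i(t_\ell)-\bm\omega_k(t_\ell)\|\to r^+$; moreover for such a pair approaching distance $r<R$ we have $k\in\mathcal N_i(t_\ell)$ for $t_\ell$ close enough to $t^\ast$ (since $r<R$), so that pair's term is present in the last sum of \eqref{eq_lyap_welldef}. But $\Psi(\|\bm\omega_i(t_\ell)-\bm\omega_k(t_\ell)\|)\to+\infty$, contradicting the uniform bound $W(0)$. Hence $t^\ast=\infty$, i.e.\ \eqref{well_define_condition} holds for all $t>0$. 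A couple of technical points must be handled carefully along the way: (a) the sensing set $\mathcal N_i(t)$ is piecewise constant and $\alpha$ (together with $\Psi$) is constructed to be $\mathcal C^1$ across $s=R$ (as verified after \eqref{alpha_example}), so edges entering/leaving $\mathcal N_i$ do not create jumps or spurious increases in $W$ — this is exactly why the $\mathcal C^1$-continuity of $\alpha$ at $R$ was emphasized; (b) one should argue local existence/uniqueness of the solution on $[0,t^\ast)$ from the local Lipschitzness of the right-hand side on the open well-defined region, which is where \textbf{A2} (coercivity of $\|\phi\|$ away from the manifold, keeping $\bold{p}_i$ in a region where $\partial f_{i,j}^{[l]}$ are well-behaved, combined with \textbf{A4}) is used to rule out finite-escape in the $\phi$-coordinates before $t^\ast$.

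I expect the main obstacle to be the bookkeeping around the time-varying, piecewise-constant neighbor graph $\mathcal N_i(t)$: making rigorous that $W$ is non-increasing \emph{across} switching instants (no term is suddenly added at a positive value without having been ``grown'' continuously from $0$ at distance $R$, which is guaranteed by $\Psi(R)=0$ and $\alpha(R)=0$), and simultaneously that the pair which ultimately violates the bound already belongs to $\mathcal N_i$ near $t^\ast$. The energy-dissipation computation itself is routine once the potential $\Psi$ is matched to the repulsion law, and the blow-up of $\Psi$ at $r^+$ does the rest.
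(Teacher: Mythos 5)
Your proof is correct and follows essentially the same route as the paper: a contradiction at the first boundary-hitting time, using a Lyapunov function whose repulsion term $\Psi(s)=\int_s^R\alpha(\sigma)\,d\sigma$ is nonincreasing along the closed-loop trajectories yet diverges as a neighbor-pair distance approaches $r$. Your explicit attention to the integrability of $\alpha$ near $r$ (mere pointwise blow-up of $\alpha$ would not suffice), to the switching of $\mathcal N_i(t)$ being harmless because $\alpha(R)=\Psi(R)=0$, and your (correct) invocation of {\bf A3} rather than {\bf A2} for the initial separation are details the paper glosses over, but the core argument is identical.
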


\begin{proof}
Rewriting all the terms of the CGVF controller \eqref{MOFM_navigation} for robot~$i$ into a compact form,
\begin{align}
\label{argumented_matrix}
F_i^{[l]}:=&[\partial f_{i,1}^{[l]}, \cdots, \partial f_{i,n}^{[l]}]\t\in\mathbb{R}^{n}, \forall l\in\mathbb{Z}_1^m,\nonumber\\
\bold{F}_i:=&[F_i^{[1]}, \cdots, F_i^{[m]} ]\in\mathbb{R}^{n\times m},\nonumber\\
K_i:=&\mbox{diag}\{k_{i,1}, \cdots, k_{i,n}\}\t\in\mathbb{R}^{n},\nonumber\\
\bm\delta_{i}:=&[\delta_{i,1}, \cdots, \delta_{i,m}]\t\in\mathbb{R}^{m},
\end{align}
with the $l$-th attraction-repulsion term $\delta_{i,l}, l\in\mathbb{Z}_1^m$ in \eqref{argumented_matrix} being
\begin{align*}
\delta_{i,l}=&-c_i(\omega_{i,l}-\omega_{\ast,l})+\sum_{k\in\mathcal N_i(t)}\alpha(\|\bm\omega_{i}-\bm\omega_{k}\|)\frac{(\omega_{i,l}-\omega_{i,k})}{\|\bm\omega_{i}-\bm\omega_{k}\|},
\end{align*}
one has that the proposed CGVF controller \eqref{MOFM_navigation} and the dynamic matrix $D_i$ in \eqref{matrix_D} become
\begin{align}
\label{compact_GVF}
\begin{bmatrix}
\bold{u}_i\\
\bold{u}_i^{\omega}
\end{bmatrix}=
\begin{bmatrix}
(-1)^n\bold{F}_i \bold{1}_m-K_i\Phi_i\\
(-1)^n\bold{1}_m+(\bold{F}_i)\t K_i\Phi_i +\bm\delta_{i}
\end{bmatrix}, 
D_i=\begin{bmatrix}
I_n & -\bold{F}_i\\
\bold{0}_{m\times n} & I_{m}
\end{bmatrix}
\end{align} 
with the column vector $\bold{1}_m:=[1, \cdots, 1]\t\in\mathbb{R}^m$. 
Let $\widetilde{\bm\omega}_i:=[\widetilde{\omega}_{i,1}, \cdots, \widetilde{\omega}_{i,m}]\t\in\mathbb{R}^m$ be the error vector between the $i$-th virtual coordinates $\bm\omega_i$ and the target virtual coordinates $\bm\omega_{\ast}$ in \eqref{condition_target_virtual_coordinate} below
\begin{align}
\label{error_virtual_coordinates}
\widetilde{\bm\omega}_i=\bm\omega_i-\bm\omega_{\ast}
\end{align}
with each term given $\widetilde{\omega}_{i,l}:=\omega_{i,l}-\omega_{\ast,l}, l\in\mathbb{Z}_1^m$, one has that
\begin{align}
\label{omgea_equation}
&\bm\omega_{i}-\bm\omega_{k}=\bm\omega_{i}-\bm\omega_{\ast}-(\bm\omega_{k}-\bm\omega_{\ast})=\widetilde{\bm\omega}_{i}-\widetilde{\bm\omega}_{k}.
\end{align}
By substituting~\eqref{compact_GVF}, \eqref{error_virtual_coordinates}, and \eqref{omgea_equation} into \eqref{GVF_dynamic}, we obtain
\begin{align}
\label{closed_loop_system}
\begin{bmatrix}
\dot{\Phi}_i\\
\dot{\widetilde{\bm\omega}}_{i}\\
\end{bmatrix}=&
\begin{bmatrix}
-K_i\Big(I_n+\bold{F}_i(\bold{F}_i)\t\Big)\Phi_i-\bold{F}_i\bm\delta_{i}\\
(\bold{F}_i)\t K_i\Phi_i +\bm\delta_{i}\\
\end{bmatrix},
\end{align}
where $\bm\delta_i$ is formulated to be
\begin{align}
\label{new_delat}
\bm\delta_{i}=&-c_i\widetilde{\bm\omega}_{i}+\sum_{k\in\mathcal N_i(t)}\alpha(\|\widetilde{\bm\omega}_{i}-\widetilde{\bm\omega}_{k}\|)\frac{(\widetilde{\bm\omega}_{i}-\widetilde{\bm\omega}_{k})}{\|\widetilde{\bm\omega}_{i}-\widetilde{\bm\omega}_{k}\|}
\end{align}
Next, we will guarantee the well-defined condition in \eqref{well_define_condition} by contradiction. Recalling~{\bf A2}, one has that the initial states of virtual coordinates satisfy 
\begin{align}
\label{initial_p_i}
\|\bm\omega_{i}(0)-\bm\omega_{k}(0)\|>r, \forall t>0, i\in\mathcal V, k\in\mathcal N_i.
\end{align}
Accordingly, we assume that there exists a finite time $T_1>0$ such that the condition \eqref{well_define_condition} is guaranteed for $t\in[0, T_1)$ but not at $t=T_1$. This implies that arbitrary two robots $i, k$ satisfy
\begin{align}
\label{assump_condition}
\exists i,k\in\mathcal V, \|\bm\omega_{i}(T_1)-\bm\omega_{k}(T_1)\|=r~\mathrm{or}~0.
\end{align}
Then, we pick a Lyapunov candidate function 
\begin{align}
\label{V_1}
V(\Phi_i, \bm\omega_i)=&\sum_{i=1}^{N}\Big\{\Phi_i\t K_i\Phi_i+c_i\widetilde{\bm\omega}_{i}\t\widetilde{\bm\omega}_{i}\Big\}\nonumber\\
&+\sum_{i=1}^N\sum_{k\in\mathcal N_i}\int_{\|\widetilde{\bm\omega}_{i}-\widetilde{\bm\omega}_{k}\|}^R\alpha(s)ds,
\end{align}
which is nonnegative and differentiable in the time interval of $t\in[0, T_1)$.
Then, for $t\in[0, T_1)$, taking the derivative of $V(\Phi_i, \bm\omega_i)$ along the time $t$ is 
\begin{align}
\label{dot_V}
\dot{V}(\Phi_i, \bm\omega_i)=&2\sum_{i=1}^N\Big\{\Phi_i\t K_i\dot{\Phi}_i+c_i\widetilde{\bm\omega}_{i}\t\dot{\widetilde{\bm\omega}}_{i}\Big\}-\sum_{i=1}^N\sum_{k\in\mathcal N_i}\nonumber\\
&\alpha(\|\widetilde{\bm\omega}_{i}-\widetilde{\bm\omega}_{k}\|)\frac{(\widetilde{\bm\omega}_{i}-\widetilde{\bm\omega}_{k})\t}{\|\widetilde{\bm\omega}_{i}-\widetilde{\bm\omega}_{k}\|}(\dot{\widetilde{\bm\omega}}_{i}-\dot{\widetilde{\bm\omega}}_{k}).
\end{align}
From the definition of $\mathcal N_i$ and $\alpha(\cdot)$ in \eqref{sensing_neighbor}  and \eqref{alpha_function}, one has that $\alpha(\|\widetilde{\bm\omega}_{i}-\widetilde{\bm\omega}_{k}\|)=\alpha(\|\widetilde{\bm\omega}_{k}-\widetilde{\bm\omega}_{i}\|), \forall k\in\mathcal N_i,$ and $\alpha(\|\widetilde{\bm\omega}_{i}-\widetilde{\bm\omega}_{k}\|)=0, \forall k\notin \mathcal N_i$, which  implies that 
\begin{align}
\label{equal_alpha_0}
&\sum_{i=1}^N\sum_{k\notin\mathcal N_i}\alpha(\|\widetilde{\bm\omega}_{i}-\widetilde{\bm\omega}_{k}\|)\frac{(\widetilde{\bm\omega}_{i}-\widetilde{\bm\omega}_{k})\t}{\|\widetilde{\bm\omega}_{i}-\widetilde{\bm\omega}_{k}\|}(\dot{\widetilde{\bm\omega}}_{i}-\dot{\widetilde{\bm\omega}}_{k})=0,\nonumber\\
&\sum_{i=1}^N\sum_{k=1}^N\alpha(\|\widetilde{\bm\omega}_{i}-\widetilde{\bm\omega}_{k}\|)\frac{(\widetilde{\bm\omega}_{i}-\widetilde{\bm\omega}_{k})\t}{\|\widetilde{\bm\omega}_{i}-\widetilde{\bm\omega}_{k}\|}\dot{\widetilde{\bm\omega}}_{i}\nonumber\\
=&\sum_{i=1}^N\sum_{k=1}^N\alpha(\|\widetilde{\bm\omega}_{k}-\widetilde{\bm\omega}_{i}\|)\frac{(\widetilde{\bm\omega}_{k}-\widetilde{\bm\omega}_{i})\t}{\|\widetilde{\bm\omega}_{k}-\widetilde{\bm\omega}_{i}\|}\dot{\widetilde{\bm\omega}}_{k}.
\end{align}
Since $k\in\mathcal N_i\cup k\notin\mathcal N_1\Leftrightarrow k\in\mathcal V$ (i.e., $k\in\mathbb{Z}_1^N$), it follows from \eqref{dot_V} and \eqref{equal_alpha_0} that
\begin{align}
\label{equal_alpha}
&\sum_{i=1}^N\sum_{k\in\mathcal N_i}\alpha(\|\widetilde{\bm\omega}_{i}-\widetilde{\bm\omega}_{k}\|)\frac{(\widetilde{\bm\omega}_{i}-\widetilde{\bm\omega}_{k})\t}{\|\widetilde{\bm\omega}_{i}-\widetilde{\bm\omega}_{k}\|}(\dot{\widetilde{\bm\omega}}_{i}-\dot{\widetilde{\bm\omega}}_{k})\nonumber\\
=&2\sum_{i=1}^N\sum_{k\in\mathcal N_i}\alpha(\|\widetilde{\bm\omega}_{i}-\widetilde{\bm\omega}_{k}\|)\frac{(\widetilde{\bm\omega}_{i}-\widetilde{\bm\omega}_{k})\t}{\|\widetilde{\bm\omega}_{i}-\widetilde{\bm\omega}_{k}\|}\dot{\widetilde{\bm\omega}}_{i}.
\end{align}
By substituting \eqref{closed_loop_system}, \eqref{new_delat} and \eqref{equal_alpha} into \eqref{dot_V}, we obtain 
\begin{align}
\label{dot_V2}
\dot{V}(\Phi_i, \bm\omega_i)=&2\sum_{i=1}^N\Big\{\Phi_i\t K_i\Big(-K_i\Big(I_n+\bold{F}_i(\bold{F}_i)\t\Big)\Phi_i-\bold{F}_i\bm\delta_{i}\Big)\nonumber\\
&-\bm\delta_i\t\Big((\bold{F}_i)\t K_i\Phi_i +\bm\delta_{i}\Big)\Big\}.
\end{align}
Meanwhile, it follows from \eqref{argumented_matrix} that $\Phi_i\t K_i\bold{F}_i\bm\delta_{i}=\bm\delta_i\t(\bold{F}_i)\t$ $ K_i\Phi_i$ is a scalar, which implies that $\dot{V}(\Phi_i, \bm\omega_i)$ in \eqref{dot_V2} becomes
\begin{align}
\label{dot_V3}
\dot{V}(\Phi_i, \bm\omega_i)=&-2\sum_{i=1}^N\Big\{\Phi_i\t \Xi_i\Phi_i+\bm\delta_i\t\bm\delta_{i}\Big\}
\end{align}
with $\Xi_i:=K_i\Big(I_n+\bold{F}_i(\bold{F}_i)\t\Big)K_i\t\in\mathbb{R}^{n\times n}$. Based on the definition of $K_i, \bold{F}_i$ in \eqref{argumented_matrix}, one has that $\Xi_i$ is a symmetry and positive-definite matrix, which implies that $\lambda_{\min}(\Xi_i)>0$.
Using the Courant-Fischer Theorem \cite{parlett1998symmetric}, one has that 
\begin{align*}
\Phi_i\t \Xi_i\Phi_i\geq \lambda_{\min}(\Xi_i)\|\Phi_i\|^2.
\end{align*}
This, together with \eqref{dot_V3}, gives that
\begin{align}
\label{dot_V4}
\dot{V}(\Phi_i, \bm\omega_i)\leq&-2\sum_{i=1}^N\Big\{\lambda_{\min}(\Xi_i)\|\Phi_i\|^2+\|\bm\delta_i\|^2\Big\}\leq 0.
\end{align}
By integrating both sides of \eqref{dot_V4} until $t=T_1$, we obtain
\begin{align}
\label{dot_V5}
&V(T_1)-V(0)\nonumber\\
\leq&-\int_0^{T_1}2\sum_{i=1}^N\Big\{\lambda_{\min}(\Xi_i(s))\|\Phi_i\|^2+\|\bm\delta_i(s)\|^2\Big\}ds.
\end{align}
On one hand, it follows from \eqref{initial_p_i} and \eqref{V_1} that the initial state $V(0)$ is bounded. Combining with the fact that the term $-\int_0^{T_1}2\sum_{i=1}^N\{\lambda_{\min}(\Xi_i(s))\|\Phi_i\|^2+\|\bm\delta_i(s)\|^2\}ds$ is upper bounded, it follows from \eqref{V_1} that $V(T_1)$ is bounded as well. On the other hand, recalling the condition in \eqref{assump_condition}, one has that 
$V(\Phi_i(T_1), \bm\omega_i(T_1))=+\infty$ if  $\|\bm\omega_{i}(T_1)-\bm\omega_{k}(T_1)\|=r~\mathrm{or}~0$, which contradicts \eqref{dot_V5}. Therefore, there does not exist such a finite time $T_1$, and $\|\bm\omega_{i}(t)-\bm\omega_{k}(t)\|>r, \forall t>0, i\in\mathcal V, k\in\mathcal N_i,$ is satisfied all along. The proof is completed.
\end{proof}

Since the proposed GVF controller~\eqref{MOFM_navigation} is guaranteed to be well-defined all along in Lemma~\ref{lemma_well_define}, we are ready to prove the three conditions {\bf C1-C3} in Definition~\ref{def_MOFM_navigation}.

\begin{lemma}
\label{lemma_manifold_convergence}
Under the assumptions of~{\bf A1, A3}, all the robots~$\mathcal V$ governed by the dynamics \eqref{robot_dynamic} and the proposed CGVF controller~\eqref{MOFM_navigation} converge to the desired common manifold $\mathcal M_i^{phy}$, i.e., {\bf C1:} $\lim_{t\rightarrow\infty}\phi_{i,j}(\bold{p}_i(t))=0, \forall i\in\mathcal V, j\in\mathbb{Z}_1^n$.
\end{lemma}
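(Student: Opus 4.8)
The plan is to run a Lyapunov/Barbalat argument built on the energy function $V$ of \eqref{V_1} and the closed-loop form \eqref{closed_loop_system}, both of which are legitimate because Lemma~\ref{lemma_well_define} guarantees that the CGVF controller \eqref{MOFM_navigation} is well-defined for all $t>0$. First I would record that $V\ge 0$; that $V(0)<\infty$ under \textbf{A3}, since each initial inter-agent distance exceeds $r$ so every integral $\int_{\|\widetilde{\bm\omega}_i(0)-\widetilde{\bm\omega}_k(0)\|}^{R}\alpha(s)\,ds$ is finite and $\Phi_i(0)$ is finite; and that $V$ is continuous across the switching instants of $\mathcal N_i(t)$ because $\alpha(R)=0$ makes the pairwise integral terms vanish whenever a distance crosses $R$. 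On each dwell interval the computation \eqref{dot_V3}--\eqref{dot_V4} gives $\dot V\le -2\sum_{i}\{\lambda_{\min}(\Xi_i)\|\Phi_i\|^{2}+\|\bm\delta_i\|^{2}\}\le 0$, so $V(t)$ is non-increasing and hence $0\le V(t)\le V(0)<\infty$ for all $t\ge 0$.

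From boundedness of $V$ I would extract the standing a~priori bounds: the quadratic terms $\Phi_i\t K_i\Phi_i$ and $c_i\widetilde{\bm\omega}_i\t\widetilde{\bm\omega}_i$ show that $\Phi_i$ and $\widetilde{\bm\omega}_i$ are bounded, and $\sum_{i}\sum_{k\in\mathcal N_i}\int_{\|\widetilde{\bm\omega}_i-\widetilde{\bm\omega}_k\|}^{R}\alpha(s)\,ds\le V(0)$. Because $\alpha(s)\to+\infty$ as $s\to r^{+}$ fast enough that $\int_{r^{+}}^{R}\alpha(s)\,ds=+\infty$ (the property already used in the proof of Lemma~\ref{lemma_well_define}), a sequence argument upgrades the strict separation of Lemma~\ref{lemma_well_define} to a \emph{uniform} one: there is $r'>r$ with $\|\bm\omega_i(t)-\bm\omega_k(t)\|\ge r'$ for all $t$, all $i$, and all $k\in\mathcal N_i$; otherwise a time sequence $t_n$ along which the distance tends to $r$ forces $\int_{\|\widetilde{\bm\omega}_i(t_n)-\widetilde{\bm\omega}_k(t_n)\|}^{R}\alpha\to+\infty$, contradicting $V(t_n)\le V(0)<\infty$. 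Consequently $\alpha(\|\widetilde{\bm\omega}_i-\widetilde{\bm\omega}_k\|)$ is uniformly bounded, so $\bm\delta_i$ in \eqref{new_delat} is bounded; together with \textbf{A4} (which bounds $\bold{F}_i$, hence $\Xi_i$), the dynamics \eqref{closed_loop_system} give that $\dot\Phi_i$ is bounded, whence $\Phi_i$ — and therefore $t\mapsto\|\Phi_i(t)\|^{2}$ — is uniformly continuous.

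Finally I would integrate \eqref{dot_V4}--\eqref{dot_V5} from $0$ to $\infty$: $\int_0^{\infty}\sum_i\{\lambda_{\min}(\Xi_i(s))\|\Phi_i(s)\|^{2}+\|\bm\delta_i(s)\|^{2}\}\,ds\le \tfrac{1}{2}V(0)<\infty$. Since $\bold{F}_i\bold{F}_i\t\succeq 0$ yields $\Xi_i\succeq K_i^{2}$, we have the time-uniform bound $\lambda_{\min}(\Xi_i)\ge\min_j k_{i,j}^{2}>0$, so $\int_0^{\infty}\|\Phi_i(s)\|^{2}\,ds<\infty$ for each $i$. Applying Barbalat's lemma to the uniformly continuous, integrable map $t\mapsto\|\Phi_i(t)\|^{2}$ gives $\|\Phi_i(t)\|\to 0$, i.e.\ $\lim_{t\to\infty}\phi_{i,j}(\bold{p}_i(t))=0$ for all $i\in\mathcal V$, $j\in\mathbb{Z}_1^n$, which is exactly \textbf{C1}; under \textbf{A1} (the homeomorphism property) this limit is what certifies that the robots reach the desired common manifold.

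The step I expect to be the main obstacle is the uniform-separation claim $\|\bm\omega_i(t)-\bm\omega_k(t)\|\ge r'>r$ over the whole time axis: Lemma~\ref{lemma_well_define} only excludes a finite-time collision, whereas the Barbalat argument needs $\bm\delta_i$ — hence $\alpha$ evaluated along the trajectory — to stay bounded, so one must genuinely rule out an asymptotic approach to the singular radius $r$, which is precisely where the divergence of $\int_{r^{+}}\alpha$ is used. A secondary technical point is the switching set $\mathcal N_i(t)$: one should verify that $V$ has no upward jumps at switching times and that $\dot V\le 0$ holds piecewise, so that both the monotonicity of $V$ and the integral bound survive the non-smoothness.
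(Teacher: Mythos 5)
Your proof is correct and rests on the same Lyapunov function \eqref{V_1} and the same dissipation inequality \eqref{dot_V3}--\eqref{dot_V4} as the paper, but it closes the argument by a different route: the paper invokes LaSalle's invariance principle (characterizing $\{\dot V=0\}$ as $\{\Phi_i=\bold{0}_n,\ \bm\delta_i=\bold{0}_m\}$ and concluding convergence to that set), whereas you integrate $\dot V$ over $[0,\infty)$, establish uniform continuity of $t\mapsto\|\Phi_i(t)\|^2$, and apply Barbalat's lemma. Your route costs more bookkeeping — the uniform separation $\|\bm\omega_i-\bm\omega_k\|\ge r'>r$ via divergence of $\int_{r^+}^R\alpha$, boundedness of $\bm\delta_i$ and $\dot\Phi_i$ (which forces you to import \textbf{A4}, not listed in the lemma's hypotheses but flagged by the paper as needed for convergence analysis anyway), and the time-uniform bound $\lambda_{\min}(\Xi_i)\ge\min_j k_{i,j}^2>0$ — but it buys rigor precisely where the paper is thin: the closed-loop system \eqref{closed_loop_system} is non-autonomous (the drifting target $\bm\omega_\ast(t)$ enters $\bold{F}_i$ through $\bm\omega_i=\widetilde{\bm\omega}_i+\bm\omega_\ast$) and has a state-dependent neighbor set, so the textbook LaSalle principle cited in the paper does not apply verbatim, while Barbalat is indifferent to these features. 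Your two flagged obstacles are exactly the right ones, and both resolve as you suggest: the asymptotic approach to radius $r$ is excluded because $\int_{r^+}^R\alpha(s)\,ds=+\infty$ (true for \eqref{alpha_example} and implicitly required already in Lemma~\ref{lemma_well_define}), and $V$ is continuous across neighbor-set switches because $\alpha(R)=0$.
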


\begin{proof}
Recalling $\dot{V}(\Phi_i, \bm\omega_i)$ in \eqref{dot_V4}, it follows from the condition of $\lambda_{\min}(\Xi_i(s))\|\Phi_i\|^2\geq 0, \|\bm\delta_i(s)\|^2\geq 0$ that $\dot{V}(\Phi_i, \bm\omega_i)=0$ only if $\|\Phi_i\|=0, \|\bm\delta_i\|=0, \forall i\in\mathcal V$, which implies that the large set $\dot{V}(\Phi_i, \bm\omega_i)=0$ only contains one unique solution $\{\Phi_i=\bold{0}_n, \bm\delta_i=\bold{0}_m, \forall i\in\mathcal V\}$, and hence, the invariance set is compact (bounded closed set). Additinally, since $\dot{V}_1(\Phi_i, \bm\omega_i)$ is non-positive in \eqref{dot_V4}, it follows from LaSalle’s invariance principle \cite{khalil2002nonlinear} that the evolutions of $\Phi_i, \bm\delta_i$
will converge to zeros, i.e.,
\begin{align}
\label{phi_delta_convergence}
\lim_{t\rightarrow\infty}\Phi_i(t)=\bold{0}_n, \lim_{t\rightarrow\infty}\bm\delta_i(t)=\bold{0}_m.
\end{align}
Together with the definition $\Phi_i$ in \eqref{GVF_dynamic}, it follows from \eqref{phi_delta_convergence} that $\lim_{t\rightarrow\infty}\phi_{i,j}(\bold{p}_i(t))=0, \forall i\in\mathcal V, j\in\mathbb{Z}_1^n$. The proof is thus completed.
\end{proof}

\begin{lemma}
\label{lemma_manifold_maneuvering}
Under the assumptions of~{\bf A1, A3}, all the robots~$\mathcal V$ governed by the dynamics \eqref{robot_dynamic} and the proposed CGVF controller~\eqref{MOFM_navigation} achieve the on-manifold maneuvering, i.e., {\bf C2:} $\lim_{t\rightarrow\infty}\dot{\bm\omega}_{i}(t)=\lim_{t\rightarrow\infty}\dot{\bm\omega}_{k}(t)\neq\bold{0}_m, \forall i\neq k\in\mathcal V$.
\end{lemma}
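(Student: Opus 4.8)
The plan is to bootstrap directly from Lemma~\ref{lemma_manifold_convergence}, which already delivers $\lim_{t\to\infty}\Phi_i(t)=\bold{0}_n$ and $\lim_{t\to\infty}\bm\delta_i(t)=\bold{0}_m$ for every $i\in\mathcal V$, together with the boundedness of the error trajectories $(\Phi_i,\widetilde{\bm\omega}_i)$ inherited from Lemmas~\ref{lemma_well_define}--\ref{lemma_manifold_convergence}. Since {\bf C2} concerns only the limiting value of $\dot{\bm\omega}_i=\bold{u}_i^{\omega}$, the first step is to read off from the compact form \eqref{compact_GVF} (using $\dot{\bm\omega}_i=\bold{u}_i^{\omega}$ from \eqref{GVF_dynamic}) that
\[
\dot{\bm\omega}_i(t)=(-1)^n\bold{1}_m+(\bold{F}_i)\t K_i\Phi_i(t)+\bm\delta_i(t),\qquad i\in\mathcal V,
\]
i.e. the desired velocity of the $m$ virtual coordinates is the fixed vector $(-1)^n\bold{1}_m$ perturbed by two terms that we will show vanish as $t\to\infty$.

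The second step is to pass to the limit. The gain matrix $K_i$ is constant, and by Assumption~{\bf A4} the entries $\partial f_{i,j}^{[l]}$ of $\bold{F}_i$ are uniformly bounded, so $\|(\bold{F}_i)\t K_i\|$ stays bounded along the trajectory; together with $\Phi_i(t)\to\bold{0}_n$ this forces $(\bold{F}_i)\t K_i\Phi_i(t)\to\bold{0}_m$. Combining with $\bm\delta_i(t)\to\bold{0}_m$, we obtain $\lim_{t\to\infty}\dot{\bm\omega}_i(t)=(-1)^n\bold{1}_m$ for all $i\in\mathcal V$. Because the right-hand side does not depend on $i$, this yields $\lim_{t\to\infty}\dot{\bm\omega}_i(t)=\lim_{t\to\infty}\dot{\bm\omega}_k(t)$ for all $i\neq k\in\mathcal V$, and since every component of $(-1)^n\bold{1}_m$ equals $\pm1\neq0$, the common limit is nonzero; this is exactly {\bf C2}. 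As a by-product, matching this limit against $\dot{\bm\omega}_{\ast}$ in \eqref{derivative_target_omega} confirms \eqref{condition_target_virtual_coordinate}, which will be reused in the coordination lemma.

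The argument is short because all the genuine analytical machinery --- well-posedness, boundedness, and the LaSalle step --- has already been expended in Lemmas~\ref{lemma_well_define}--\ref{lemma_manifold_convergence}. The one point that deserves care is justifying that $(\bold{F}_i)\t K_i\Phi_i$ truly tends to zero rather than merely remaining bounded: this is precisely where the uniform boundedness of the first partial derivatives ({\bf A4}) is indispensable, since the absolute virtual coordinate $\bm\omega_i(t)=\widetilde{\bm\omega}_i(t)+\bm\omega_{\ast}(t)$ grows roughly linearly in $t$ (as $\dot{\bm\omega}_{\ast}$ is the constant $(-1)^n\bold{1}_m$) and therefore leaves every compact set while the robots keep maneuvering; only the boundedness of $\bold{F}_i$ as a function of $\bm\omega_i$ saves the limit. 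Beyond that, I foresee no obstacle.
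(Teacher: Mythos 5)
Your proof is correct and takes essentially the same route as the paper, which reads $\dot{\widetilde{\bm\omega}}_i=(\bold{F}_i)\t K_i\Phi_i+\bm\delta_i$ off the closed-loop system \eqref{closed_loop_system}, sends both terms to zero via \eqref{phi_delta_convergence}, and identifies the common limit with $\dot{\bm\omega}_{\ast}=(-1)^n\bold{1}_m$. Your explicit appeal to {\bf A4} to justify that $(\bold{F}_i)\t K_i\Phi_i\rightarrow\bold{0}_m$ (rather than merely staying bounded while $\bm\omega_i$ leaves every compact set) is a point the paper leaves implicit --- its hypothesis list for this lemma cites only {\bf A1, A3} --- so that extra care is a genuine, if small, improvement.
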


\begin{proof}
Recalling the derivative $\dot{\widetilde{\bm\omega}}_{i}$ in \eqref{closed_loop_system} below,
\begin{align}
\label{omega_closed_loop}
\dot{\widetilde{\bm\omega}}_{i}=(\bold{F}_i)\t K_i\Phi_i +\bm\delta_{i},
\end{align}
it follows from \eqref{phi_delta_convergence} and \eqref{omega_closed_loop} that $\lim_{t\rightarrow\infty}\dot{\widetilde{\bm\omega}}_{i}=\bold{0}_m, \forall i\in\mathcal V$. Since $\widetilde{\bm\omega}_i=\bm\omega_i-\bm\omega_{\ast}$ in \eqref{error_virtual_coordinates}, one has that $\lim_{t\rightarrow\infty}\dot{\bm\omega}_{i}(t)=\lim_{t\rightarrow\infty}\dot{\bm\omega}_{k}(t)\neq\bold{0}_m, \forall i\neq k\in\mathcal V$. The proof is thus completed.
\end{proof}

\begin{lemma}
\label{lemma_manifold_OF_coordination}
Under the assumption of~{\bf A4}, all the robots $\mathcal V$ governed by the dynamics \eqref{robot_dynamic} and the proposed CGVF controller~\eqref{MOFM_navigation} achieve the ordering-flexible pattern on the manifold, i.e., {\bf C3:} (a) $\lim_{t\rightarrow\infty}$ ${1}/{N}\sum_{i=1}^N\bm\omega_i(t)-\bm\omega_{\ast}(t)=\bold{0}_m$, (b) $r<\lim_{t\rightarrow\infty}\|\bm\omega_{i}(t)-\bm\omega_{k}(t)\|$ $<R, i\in\mathcal V, \forall k\in\mathcal N_i$.
\end{lemma}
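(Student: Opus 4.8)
The plan is to build directly on the convergence facts already secured in Lemmas~\ref{lemma_well_define}--\ref{lemma_manifold_maneuvering}. Recall that the Lyapunov function $V$ in \eqref{V_1} is non-increasing along solutions (cf.\ \eqref{dot_V4}), so $V(t)\le V(0)<\infty$ for all $t\ge 0$; that $\lim_{t\to\infty}\Phi_i(t)=\bold{0}_n$ and $\lim_{t\to\infty}\bm\delta_i(t)=\bold{0}_m$ for every $i\in\mathcal V$ by \eqref{phi_delta_convergence}; and that $\lim_{t\to\infty}\dot{\widetilde{\bm\omega}}_i(t)=\bold{0}_m$ via \eqref{omega_closed_loop}. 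Under {\bf A4} the matrices $\bold{F}_i$ are bounded, so the coupling terms in \eqref{closed_loop_system} are well-behaved and these limits hold in the present setting. Part~(a) will come from an antisymmetry (Newton's-third-law) cancellation of the repulsion terms, and part~(b) from a barrier estimate that exploits the boundedness of $V$ together with the blow-up of $\alpha$ near $r$ in \eqref{alpha_function}.

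For part~(a) I would sum the attraction--repulsion vectors $\bm\delta_i$ in \eqref{new_delat} over $i\in\mathcal V$. Because the neighbour relation is symmetric ($k\in\mathcal N_i\Leftrightarrow i\in\mathcal N_k$ by \eqref{sensing_neighbor}), $\alpha(\|\widetilde{\bm\omega}_i-\widetilde{\bm\omega}_k\|)=\alpha(\|\widetilde{\bm\omega}_k-\widetilde{\bm\omega}_i\|)$, and the unit vector $(\widetilde{\bm\omega}_i-\widetilde{\bm\omega}_k)/\|\widetilde{\bm\omega}_i-\widetilde{\bm\omega}_k\|$ is odd under the swap $i\leftrightarrow k$, the double sum of repulsion terms vanishes, leaving $\sum_{i\in\mathcal V}\bm\delta_i=-c\sum_{i\in\mathcal V}\widetilde{\bm\omega}_i$ when the attraction gains are taken common, $c_i\equiv c$ (otherwise one gets the weighted version $\sum_i c_i\widetilde{\bm\omega}_i\to\bold{0}_m$). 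Letting $t\to\infty$ and using $\bm\delta_i\to\bold{0}_m$ gives $\sum_{i\in\mathcal V}\widetilde{\bm\omega}_i\to\bold{0}_m$, i.e., by \eqref{error_virtual_coordinates}, $\tfrac1N\sum_{i=1}^N\bm\omega_i(t)-\bm\omega_\ast(t)\to\bold{0}_m$, which is C3(a).

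For part~(b) the lower bound is the heart of the matter. Since $V(t)\le V(0)$ and every summand of $V$ in \eqref{V_1} is nonnegative, each barrier integral satisfies $\int_{\|\widetilde{\bm\omega}_i-\widetilde{\bm\omega}_k\|}^R\alpha(s)\,ds\le V(0)$ for all $t$ and all $k\in\mathcal N_i$. Because $\alpha$ is non-integrable at $r$ (the divergence property ii) in \eqref{alpha_function}, realised by the representative choice \eqref{alpha_example}), the map $\rho\mapsto\int_\rho^R\alpha(s)\,ds$ is continuous, strictly decreasing, and tends to $+\infty$ as $\rho\to r^+$; hence there is $\rho_\ast>r$ with $\int_{\rho_\ast}^R\alpha(s)\,ds=V(0)+1$, forcing $\|\bm\omega_i(t)-\bm\omega_k(t)\|=\|\widetilde{\bm\omega}_i(t)-\widetilde{\bm\omega}_k(t)\|\ge\rho_\ast>r$ for every $t\ge0$ and every pair $i\ne k$ (for $k\notin\mathcal N_i$ this is immediate from \eqref{sensing_neighbor}), so $\lim_{t\to\infty}\|\bm\omega_i(t)-\bm\omega_k(t)\|>r$. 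For the upper bound, $k\in\mathcal N_i$ gives $\|\bm\omega_i-\bm\omega_k\|\le R$ by the very definition \eqref{sensing_neighbor}, and the strict inequality for genuinely interacting neighbours (those with $\alpha(\|\bm\omega_i-\bm\omega_k\|)>0$, hence distance $<R$) persists in the limit by continuity. To make the limits meaningful I would note in addition that the reduced $\widetilde{\bm\omega}$-dynamics obtained from \eqref{closed_loop_system} once $\Phi_i\to\bold{0}_n$ is the negative gradient flow of the $\mathcal C^2$ potential $\sum_i\tfrac{c}{2}\|\widetilde{\bm\omega}_i\|^2+\tfrac12\sum_{i\ne k}\Psi(\|\widetilde{\bm\omega}_i-\widetilde{\bm\omega}_k\|)$ with $\Psi(s):=\int_s^R\alpha$, whose sublevel sets are compact by the barrier bound, so each $\widetilde{\bm\omega}_i$ converges to a rest point, consistently with $\dot{\widetilde{\bm\omega}}_i\to\bold{0}_m$ from Lemma~\ref{lemma_manifold_maneuvering}.

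The main obstacle is exactly this strict non-collision margin in part~(b): Lemma~\ref{lemma_well_define} only guarantees $\|\bm\omega_i-\bm\omega_k\|>r$ at each finite time and does not by itself rule out the distance drifting down to $r$ as $t\to\infty$; upgrading this to a uniform gap $\rho_\ast>r$ relies on the non-integrability of $\alpha$ at $r$ and on $V$ remaining finite for all time. A secondary technical point is justifying that each $\widetilde{\bm\omega}_i$ — equivalently each pairwise distance — genuinely converges rather than merely having $\dot{\widetilde{\bm\omega}}_i\to\bold{0}_m$; the gradient-flow structure with compact sublevel sets closes this gap. Conditions C3(a)--(b) then follow by assembling these pieces, and the flexibility of the ordering is a structural consequence of $\mathcal N_i$ being time-varying and of no permutation of the robots being fixed a priori in \eqref{MOFM_navigation}--\eqref{condition_ordering_flexible}.
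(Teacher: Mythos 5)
Your proof is correct, and part (a) follows essentially the paper's own route: sum the $\bm\delta_i$ from \eqref{new_delat}, cancel the pairwise repulsion terms by antisymmetry of $(\widetilde{\bm\omega}_{i}-\widetilde{\bm\omega}_{k})/\|\widetilde{\bm\omega}_{i}-\widetilde{\bm\omega}_{k}\|$ under $i\leftrightarrow k$, and let $\bm\delta_i\to\bold{0}_m$ from \eqref{phi_delta_convergence} force $\sum_i c_i\widetilde{\bm\omega}_i\to\bold{0}_m$. Your remark that this yields the unweighted average in C3(a) only when $c_i\equiv c$ is a point the paper glosses over in \eqref{partial_sum_omega}, where $-\sum_i c_i\widetilde{\bm\omega}_i$ is silently replaced by $\sum_i(\bm\omega_i-\bm\omega_\ast)$; your version is the more careful one. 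Where you genuinely diverge is part (b). The paper proves the lower bound by citing Lemma~\ref{lemma_well_define}, i.e., $\|\bm\omega_i(t)-\bm\omega_k(t)\|>r$ at every finite $t$, and concludes $r<\lim_{t\to\infty}\|\bm\omega_i(t)-\bm\omega_k(t)\|$ directly; strictly speaking a pointwise strict inequality does not survive passage to the limit, so the paper's step has a small gap. Your barrier argument closes it: $V(t)\le V(0)$ bounds each integral $\int_{\|\widetilde{\bm\omega}_i-\widetilde{\bm\omega}_k\|}^{R}\alpha(s)\,ds$, and since $\rho\mapsto\int_\rho^R\alpha(s)\,ds$ diverges as $\rho\to r^{+}$ (true for \eqref{alpha_example}, and implicitly required already in the contradiction step of Lemma~\ref{lemma_well_define}), you obtain a uniform margin $\rho_\ast>r$ valid for all time, hence in the limit. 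Your additional gradient-flow observation addressing whether the pairwise distances actually converge is likewise absent from the paper, which tacitly assumes the limits exist. The only soft spot on your side mirrors the paper's: the strict upper bound $<R$ in the limit is argued only loosely in both proofs, since $k\in\mathcal N_i$ gives $\le R$ and neither argument fully rules out the limiting distance equalling $R$ exactly.
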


\begin{proof}
We will prove the condition (a) first. Given the condition $\lim_{t\rightarrow\infty}\bm\delta_i(t)=\bold{0}_m, \forall i\in\mathcal V$ in \eqref{phi_delta_convergence}, one has that 
\begin{align}
\label{sum_delta_0}
\lim_{t\rightarrow\infty}\sum_{i=1}^N\bm\delta_i(t)=\bold{0}_m.
\end{align}
Meanwhile, recalling from the definition of $\bm\delta_i$ given in \eqref{new_delat}, it follows from \eqref{sum_delta_0} that
\begin{align}
\label{all_sum_zero}
\lim_{t\rightarrow\infty}\bigg\{&-\sum_{i=1}^Nc_i\widetilde{\bm\omega}_{i}(t)+\sum_{i=1}^N\sum_{k\in\mathcal N_i(t)}\alpha(\|\widetilde{\bm\omega}_{i}(t)-\widetilde{\bm\omega}_{k}(t)\|)\nonumber\\
&\frac{(\widetilde{\bm\omega}_{i}(t)-\widetilde{\bm\omega}_{k}(t))}{\|\widetilde{\bm\omega}_{i}(t)-\widetilde{\bm\omega}_{k}(t)\|}\bigg\}=\bold{0}_m.
\end{align}
Since the term $\sum_{i=1}^N\sum_{k\in\mathcal N_i(t)}\alpha(\|\widetilde{\bm\omega}_{i}-\widetilde{\bm\omega}_{k}\|)\frac{(\widetilde{\bm\omega}_{i}-\widetilde{\bm\omega}_{k})}{\|\widetilde{\bm\omega}_{i}-\widetilde{\bm\omega}_{k}\|}=\bold{0}_m$, one has that \eqref{all_sum_zero} becomes
\begin{align}
\label{partial_sum_omega}
\lim_{t\rightarrow\infty}-\sum_{i=1}^Nc_i\widetilde{\bm\omega}_{i}(t)=\lim_{t\rightarrow\infty}\sum_{i=1}^N\{\bm\omega_{i}(t)-\bm\omega_{\ast}(t)\}=\bold{0}_m.
\end{align}
Together with $\widetilde{\bm\omega}_{i}$ in \eqref{error_virtual_coordinates}, it follows from \eqref{partial_sum_omega} that the condition (a) is formulated to be
\begin{align*}
\lim_{t\rightarrow\infty}\frac{1}{N}\sum_{i=1}^N\bm\omega_i(t)-\bm\omega_{\ast}(t)=\lim_{t\rightarrow\infty}\frac{\sum_{i=1}^N(\bm\omega_i(t)-\bm\omega_{\ast}(t))}{N}=\bold{0}_{m}.
\end{align*}
The condition (a) is guaranteed. Then, we will discuss two inequalities in condition (b). Precisely, recalling \eqref{well_define_condition} in Lemma~\ref{lemma_well_define}, one has that the left-side inequality is satisfied, namely, $r<\lim_{t\rightarrow\infty}\|\bm\omega_{i}(t)-\bm\omega_{k}(t)\|, i\in\mathcal V, \forall k\in\mathcal N_i$. Moreover, since $\|\bm\omega_i-\bm\omega_k\|\leq R$ for any robot $k\in\mathcal N_i$ in~\eqref{sensing_neighbor}, one has that the right-side inequality of the condition (b) is naturally satisfied as well. Accordingly, the two conditions (a)-(b) are both guaranteed, and the proof is completed.
\end{proof}

\begin{theorem}
\label{theorem_CGVF}
Under the assumptions of~{\bf A1-A4}, all the robots~$\mathcal V$ governed by the dynamics \eqref{robot_dynamic} and the proposed CGVF controller~\eqref{MOFM_navigation} achieve the {\it MOFM-Nav}, i.e., {\bf C1-C3} in Definition~\ref{def_MOFM_navigation}. 
\end{theorem}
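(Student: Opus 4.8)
The plan is to obtain Theorem~\ref{theorem_CGVF} as a direct synthesis of the four preceding lemmas, since conditions \textbf{C1}--\textbf{C3} of Definition~\ref{def_MOFM_navigation} have each already been established separately. The first step is to invoke Lemma~\ref{lemma_well_define}: under \textbf{A2} (together with the initial separation $\|\bm\omega_i(0)-\bm\omega_k(0)\|>r$), the repulsion term in \eqref{MOFM_navigation} never blows up, so $\|\bm\omega_i(t)-\bm\omega_k(t)\|>r$ holds for all $t>0$ and every $i\in\mathcal V$, $k\in\mathcal N_i$. This is the logical prerequisite for everything that follows, because the Lyapunov/LaSalle arguments in the subsequent lemmas are carried out along trajectories that must exist and stay away from the ill-defined set $\|\bm\omega_i-\bm\omega_k\|\le r$ for all time; in particular the candidate $V$ in \eqref{V_1} is differentiable and finite only once these points are excluded.

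With well-posedness in hand, I would then cite Lemma~\ref{lemma_manifold_convergence} (valid under \textbf{A1}, \textbf{A3}) to get $\lim_{t\to\infty}\phi_{i,j}(\bold{p}_i(t))=0$ for all $i\in\mathcal V$, $j\in\mathbb{Z}_1^n$, which is exactly \textbf{C1}. Next, Lemma~\ref{lemma_manifold_maneuvering} (again under \textbf{A1}, \textbf{A3}) gives $\lim_{t\to\infty}\dot{\widetilde{\bm\omega}}_i(t)=\bold{0}_m$, hence $\lim_{t\to\infty}\dot{\bm\omega}_i(t)=\lim_{t\to\infty}\dot{\bm\omega}_k(t)=\dot{\bm\omega}_\ast=[(-1)^n,\dots,(-1)^n]\t\neq\bold{0}_m$ for all $i\neq k\in\mathcal V$, which is \textbf{C2}. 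Finally, Lemma~\ref{lemma_manifold_OF_coordination} (under \textbf{A4}) yields both parts of \textbf{C3}: part (a) follows from $\sum_i\bm\delta_i\to\bold{0}_m$ combined with the antisymmetry of the neighboring-repulsion sum, and part (b) follows from the lower bound of Lemma~\ref{lemma_well_define} on the left and the definition of the sensing set $\mathcal N_i$ in \eqref{sensing_neighbor} on the right. Collecting these, \textbf{C1}--\textbf{C3} all hold along the closed-loop trajectories, which by Definition~\ref{def_MOFM_navigation} is precisely the \textit{MOFM-Nav}, completing the proof.

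As for difficulty, there is no genuine computational obstacle here: the theorem is essentially a packaging result assembled from Lemmas~\ref{lemma_well_define}--\ref{lemma_manifold_OF_coordination}, all of whose hypotheses are subsumed by \textbf{A1}--\textbf{A4}. The one point that deserves care is the order of invocation---the convergence argument of Lemma~\ref{lemma_manifold_convergence} reuses the dissipation inequality $\dot V\le 0$ from the proof of Lemma~\ref{lemma_well_define}, so the global well-posedness claim genuinely has to come first. It is also worth remarking (this is where \textbf{A5} enters) that choosing the safe radius $r$ sufficiently small keeps the target-attraction/neighbor-repulsion balance underlying \textbf{C3}(b) attainable on the manifold, so that the stated conclusion is non-vacuous.
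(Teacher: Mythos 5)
Your proposal is correct and follows essentially the same route as the paper, whose proof is simply "We draw the conclusion from Lemmas \ref{lemma_existance_nu}--\ref{lemma_manifold_OF_coordination} directly"; your version just spells out which lemma supplies which of \textbf{C1}--\textbf{C3} and rightly notes that the well-posedness result of Lemma~\ref{lemma_well_define} must come first since the later LaSalle arguments reuse its dissipation inequality. The only omission is an explicit mention of Lemma~\ref{lemma_existance_nu} (the feasible auxiliary vectors), which the paper also cites as part of the chain, though your argument uses it implicitly through the form of the controller~\eqref{MOFM_navigation}.
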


\begin{proof}
We draw the conclusion from Lemmas \ref{lemma_existance_nu}-\ref{lemma_manifold_OF_coordination} directly.
\end{proof}

\section{Algorithm Verification}
\label{section_algorithm}
In this section, we consider $N=7$ robots described by~\eqref{robot_dynamic} and choose all the parameters to be the same for all simulations. Precisely, the sensing and safe radii $R$ and $r$ for the virtual coordinates are set to be $R=1.6, r=0.4$, respectively. The control gains $k_{i,j}, c_i$ in \eqref{MOFM_navigation} are set to be $k_{i,j}=0.7, c_i=20, \forall i\in\mathcal V, j\in\mathbb{Z}_1^n$. The potential function $\alpha$ is designed in \eqref{alpha_example} according to \cite{hu2024coordinated}.

\begin{figure}[!htb]
\centering
\includegraphics[width=\hsize]{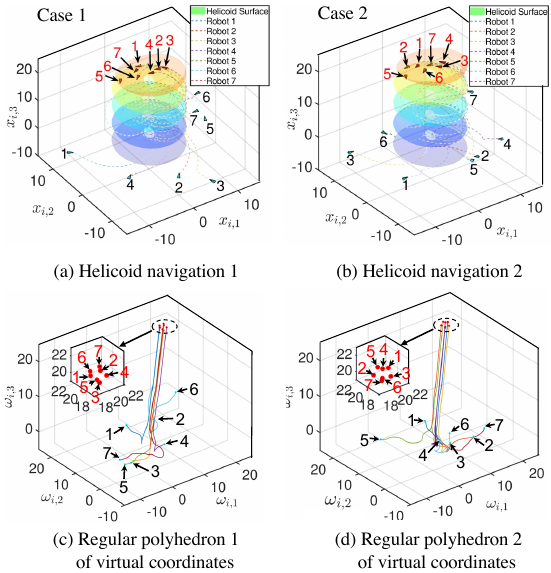}
\caption{\textbf{First Simulation: Flexibility of the {\it MOFM-Nav} on the Helicoid Manifold with Different Initial Positions in Cases 1-2.} (a)-(b) Trajectories of the $7$ robots controlled by the CGVF controller \eqref{MOFM_navigation} under different initial positions. (c)-(d) Trajectories of the corresponding virtual coordinates $\omega_{i,j}$, where $i \in \mathcal{V}$ and $j = 1, 2, 3$. In (a)-(b), the blue and red triangles represent the initial and final positions of the robots, respectively, while the dashed lines indicate their trajectories. Similarly, in (c)-(d), the blue and red circles denote the initial and final positions of the virtual coordinates $\bm{\omega}_i$ for $i \in \mathcal{V}$, with the lines showing their trajectories.}
\label{helicold_surface}
\end{figure}

\begin{figure}[!htb]
\centering
\includegraphics[width=8cm]{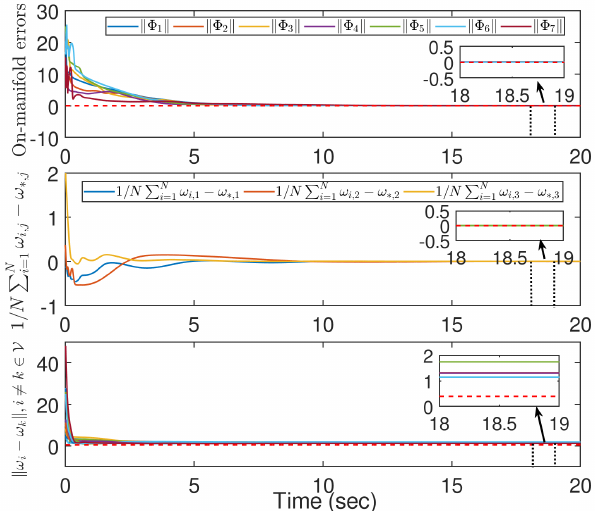}
\caption{Temporal evolution of the on-manifold convergence errors $\|\Phi_i\|$, $i=1, \cdots, 7$, the condition of ordering-flexible coordination ${1}/{N}\sum_{i=1}^N\omega_{i,j}(t)-\omega_{\ast,j}, j=1, 2, 3$ in Definition~\ref{def_MOFM_navigation}, and the relative distance between arbitrary two virtual coordinates $\|\bm\omega_i-\bm\omega_k\|, \forall i\neq k\in\mathcal V$ in Case 1 of Fig.~\ref{helicold_surface} (a), (c) for example. }
\label{spin_state_1}
\end{figure}

\begin{figure}[!htb]
\centering
\includegraphics[width=8cm]{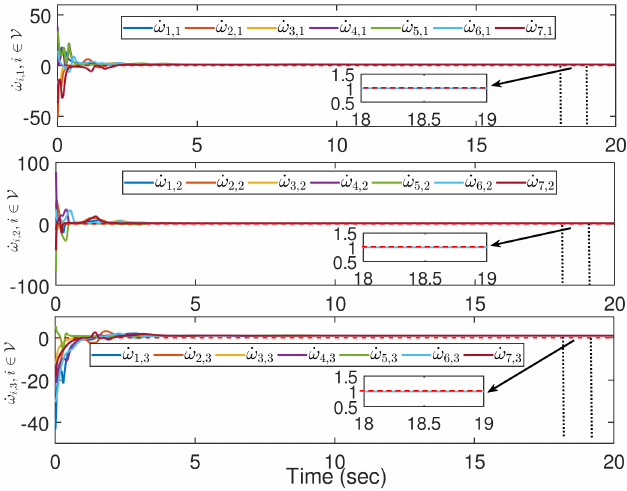}
\caption{Temporal evolution of the derivative of the virtual coordinates $\dot{\omega}_{i,j}, i\in\mathcal V, j=1,2, 3$, in Case 1 of Fig.~\ref{helicold_surface} (a), (c) for example.}
\label{spin_state_2}
\end{figure}

In the first simulation (i.e., Figs.~\ref{helicold_surface}-\ref{spin_state_2}), to show the flexibility of the {\it MOFM-Nav} with different initial positions, we consider a desired Helicoid Manifold with three virtual coordinates. Therein, the parameterization is characterized by
\begin{align*}
x_{i,1} =& (4 + 3\cos\omega_{i,1})\cos\omega_{i,2},\nonumber\\
x_{i,2} =& (4 + 3\cos\omega_{i,1})\sin\omega_{i,2},\nonumber\\
x_{i,3} =& \omega_{i,1} + \sin(\omega_{i,2}+\omega_{i,3}), i\in\mathcal V,
\end{align*}
satisfying the assumptions of {\bf A1, A3, A4}. Figs.~\ref{helicold_surface} (a)-(b) illustrate two cases of seven robots from different initial positions (i.e., blue triangles with black numbers) to the helicoid-manifold navigation with different ordering sequences (i.e., red triangles with red numbers), which verify the flexibility of the proposed CGVF controller \eqref{MOFM_navigation}. Moreover, Figs.~\ref{helicold_surface} (c)-(d) also illustrate the trajectories of the virtual coordinates $\bm\omega_i$ in the 3-D virtual space, where the blue circles (i.e., the initial positions of virtual coordinates satisfying {\bf A2})
to the regular polyhedron of the zoomed-in red circles (i.e., the final positions of virtual coordinates) with different ordering sequences.

\begin{figure}[!htb]
\centering
\includegraphics[width=\hsize]{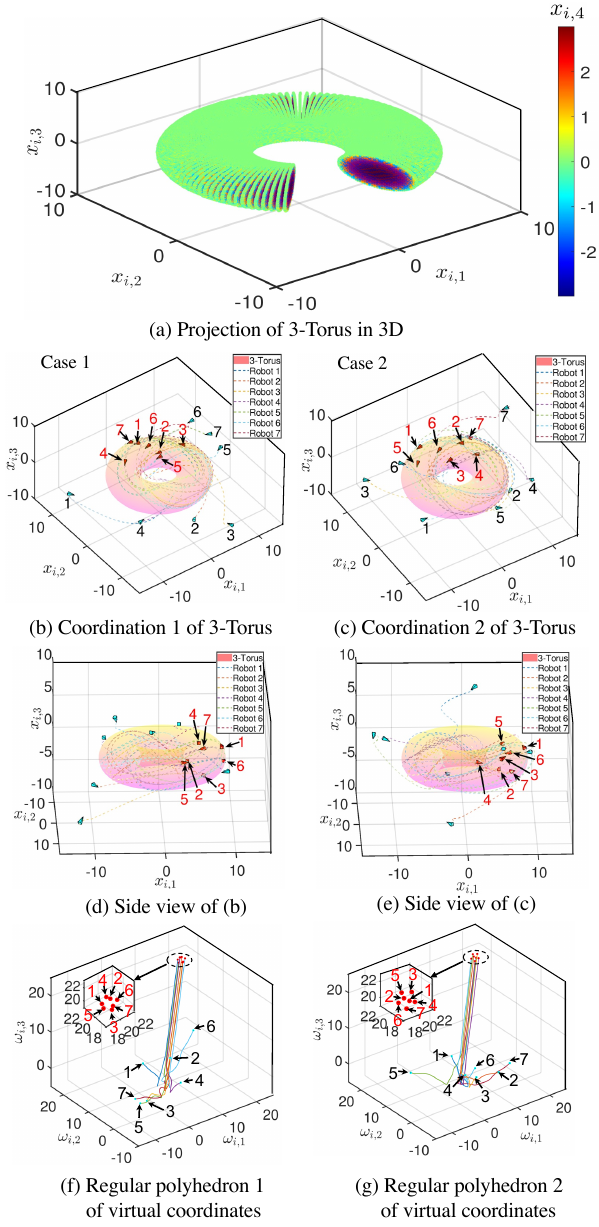}
\caption{\textbf{Second Simulation: Adaptability to the higher-dimensional 3-Torus Manifold in Cases 1-2.} (a) The $3$-D projection of the 3-torus manifold in \eqref{3_torus_equa} with the fourth coordinate represented by varying colors. 
(b)-(c) The $3$-D projection of the first three coordinates of the $7$ robots, governed by the CGVF controller \eqref{MOFM_navigation}, under different initial positions in Cases 1-2. (d)-(e) Side views of the subfigures (b)-(c). (f)-(g) Trajectories of the corresponding three virtual coordinates $\omega_{i,j}$, where $i \in \mathcal{V}$ and $j = 1, 2, 3$ in Cases 1-2. Here, the blue and red triangles, blue and red circles, and dashed colored lines have the same meanings as those in Fig.~\ref{helicold_surface}.
}
\label{3_torus}
\end{figure}

\begin{figure}[!htb]
\centering
\includegraphics[width=\hsize]{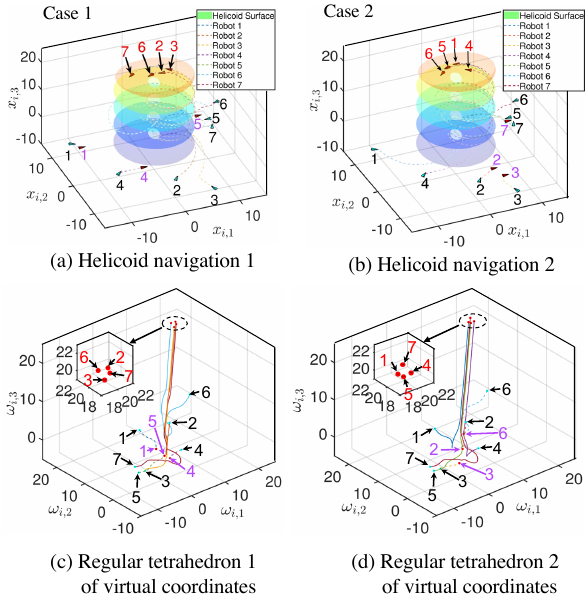}
\caption{\textbf{Third Simulation: Robustness to Some Robots Breakdown in Cases 1-2.} (a)-(b) Trajectories of the $7$ robots when robots $i = 1, 4, 5$ and $i = 2, 3, 6$ suddenly breakdown, respectively. (c)-(b) Trajectories of the corresponding virtual coordinates $\omega_{i,j}, i \in \mathcal{V}$ and $j = 1, 2, 3$ under the scenariso of robots breakdown. Here, the blue and red triangles, blue and red circles, and dashed colored lines have the same meanings as those in Fig.~\ref{helicold_surface}. The purple numbers represent the breakdown robots. }
\label{Broken_down}
\end{figure}

\begin{figure}[!htb]
\centering
\includegraphics[width=\hsize]{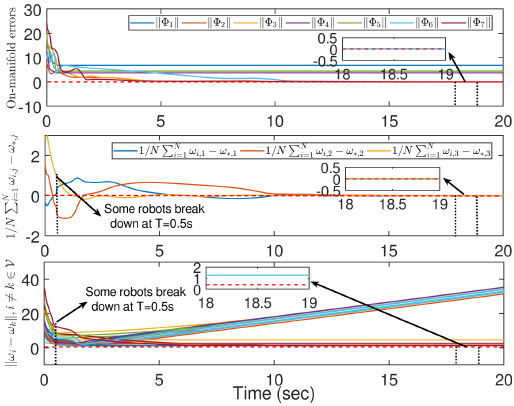}
\caption{Temporal evolution of the on-manifold convergence errors $\|\Phi_i\|$, $i=1, \cdots, 7$, the condition of ordering-flexible coordination ${1}/{N}\sum_{i=1}^N\omega_{i,j}(t)-\omega_{\ast,j}, j=1, 2, 3$ in Definition~\ref{def_MOFM_navigation}, and the relative distance between arbitrary two virtual coordinates $\|\bm\omega_i-\bm\omega_k\|, \forall i\neq k\in\mathcal V$ in Case 1 of Fig.~\ref{Broken_down} (a), (c) for example. }
\label{broken_state_1}
\end{figure}

\begin{figure}[!htb]
\centering
\includegraphics[width=8cm]{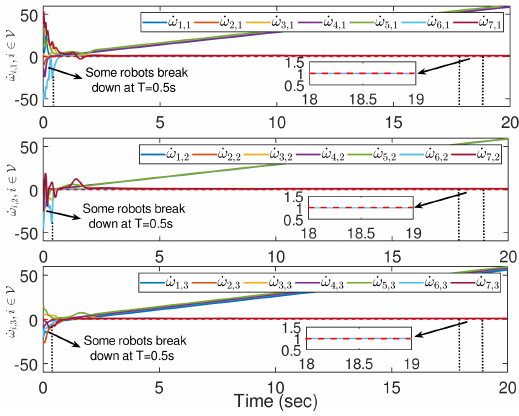}
\caption{Temporal evolution of the derivative of the virtual coordinates $\dot{\omega}_{i,j}, i\in\mathcal V, j=1,2, 3$, in Case 1 of Fig.~\ref{Broken_down} (a), (c) for example. }
\label{broken_state_2}
\end{figure}

Additionally, to quantitatively analyze the effectiveness of the proposed CGVF controller \eqref{MOFM_navigation}, we take Case 1 in Fig.~\ref{helicold_surface}~(a) as an illustrative example. It is observed in Fig.~\ref{spin_state_1} that the on-manifold convergence errors converge to zeros, i.e., $\lim_{t\rightarrow\infty}\|\Phi_i(t)\|=0, i\in\mathbb{Z}_1^7$, which verifies the on-manifold convergence in condition {\bf C1} of Definition~\ref{def_MOFM_navigation}.
Moreover, the evolutions of $\lim_{t\rightarrow\infty}{1}/{N}\sum_{i=1}^N\omega_{i,j}(t)-\omega_{\ast,j}=0, j=1, 2, 3$, and $0.4<\lim_{t\rightarrow\infty}\|\bm\omega_{i}(t)-\bm\omega_{k}(t)\|<1.6, i\in\mathcal V, \forall k\in\mathcal N_i$ in Fig.~\ref{spin_state_1} indicates that the ordering-flexible coordination in condition {\bf C3} of Definition~\ref{def_MOFM_navigation} is achieved. As shown in Fig.~\ref{spin_state_2}, the derivative of the virtual coordinates also converges to zeros, i.e., $\lim_{t\rightarrow\infty}\dot{\omega}_{i,j}(t)=0, j\in\mathbb{Z}_1^3$, which demonstrates the on-manifold maneuvering in condition {\bf C2} of Definition~\ref{def_MOFM_navigation}. Therefore, the definition of the {\it MOFM-Nav} is rigorously guaranteed.

In the second simulation (i.e., Fig.~\ref{3_torus}), to show the adaptability of the CGVF controller \eqref{MOFM_navigation} in a higher-dimensional Euclidean space, we consider a desired 3-torus manifold  in 4-D, whose  parameterization is characterized by
\begin{align}
\label{3_torus_equa}
x_{i,1} =& (6 + 3\cos\omega_{i,1})\cos\omega_{i,2},\nonumber\\
x_{i,2} =& (6 + 3\cos\omega_{i,1})\sin\omega_{i,2},\nonumber\\
x_{i,3} =& 3 \sin\omega_{i,2}\cos\omega_{i,3}, \nonumber\\
x_{i,4} =&3 \sin\omega_{i,2}\sin\omega_{i,3}, i\in\mathcal V,
\end{align}
satisfying the assumptions of {\bf A1, A3, A4}. To better understand the 3-torus manifold in \eqref{3_torus_equa} vividly, Fig.~\ref{3_torus} (a) first illustrates the 3-torus manifold embedded in $3$-D by projecting the fourth coordinate 
into the varying colors. As shown in Figs.~\ref{3_torus} (b)-(c), we also show the projection trajectories of seven robots starting from different initial positions (i.e., blue triangles) to the final on-manifold navigation (i.e., red triangles) with different ordering sequences. Moreover, to better show the trajectory evolutions of the robots on the projected 3-torus manifold, we provide the side views of Figs.~\ref{3_torus} (b)-(c) in Figs.~\ref{3_torus} (d)-(e), which indicates that the robots not only maneuver on the surface but also inside of the torus tube as well. Analogous to Figs.~\ref{helicold_surface} (c)-(d), we also illustrate the evolution of the virtual coordinates as well, where the blue circles (i.e., the initial positions of virtual coordinates satisfying {\bf A2})
to the regular polyhedron of the zoomed-in red circles (i.e., the final positions of virtual coordinates) with different ordering sequences. Therefore, the adaptability of the proposed CGVF controller \eqref{MOFM_navigation} in higher-dimensional Euclidean space is thus verified.

In the third simulation (i.e., Figs.~\ref{Broken_down}-\ref{broken_state_2}), to show the robustness of the CGVF controller \eqref{MOFM_navigation} to some robots breakdown, we consider two kinds of breakdown scenarios in the helicoid-manifold navigation missions, i.e., robots $i=1, 4, 5$ and $i=2, 3, 6$ breakdown at $t=0.2s$, where the initial positions of seven robots are set to be the same as the ones in Fig.~\ref{helicold_surface} (a). Precisely, as shown Figs.~\ref{Broken_down} (a)-(b), the remaining four robots (i.e., $i=2,3,6,7$ in Case 1 and $i=1,4,5,7$ in Case~2) still converge to and maneuver along the Helicoid Manifold with different ordering sequences. Moreover, it is observed in Figs.~\ref{Broken_down}~(c)-(d) that the virtual coordinates of the three breakdown robots stop, while the remaining four virtual coordinates of the robots will form the regular tetrahedron (i.e., zoomed-in small figures) with different ordering sequences, which thus verifies the robustness of the propose CGVF controller. Additionally, we also illustrate the state evolution of Fig.~\ref{Broken_down}~(a) in Figs.~\ref{broken_state_1}-\ref{broken_state_2}. Analogous to Figs.~\ref{spin_state_1}-\ref{spin_state_2}, the evolution of the on-manifold convergence errors satisfy $\lim\|\Phi_i(t)\|=0, i=2,3,6,7$, the evolution of the conditions of ordering-flexible coordination satisfy $\lim_{t\rightarrow\infty}{1}/{N}\sum_{i=1}^N\omega_{i,j}(t)-\omega_{\ast,j}=0, i=2,3,6,7, j=1, 2, 3$, and $0.4<\lim_{t\rightarrow\infty}\|\bm\omega_{i}(t)-\bm\omega_{k}(t)\|<1.6, i=2,3,6,7, \forall k\in\mathcal N_i$ and the evolution of the derivative of virtual coordinates satisfy $\lim_{t\rightarrow\infty}\bm\omega_i=0,  i=2,3,6,7,$ for the remaining four robots, which rigorously achieves all the conditions {\bf C1-C3} in Definition~\ref{def_MOFM_navigation}. Therefore, the robustness of the scenario where some robots break down is verified.

\section{Conclusion}
\label{section_conclusion}
In this paper, we have presented the design of the CGVF algorithm to achieve {\it MOFM-Nav} on the desired $m$-D manifold ($m\geq2$). To bridge the gap between ordering-flexible coordination design and the suitable decoupling of the last $m$ entries of the propagation term in the CGVF algorithm, we have obtained a feasible solution of auxiliary vectors. Then, the global convergence and on-manifold ordering-flexible motion coordination have been both guaranteed. Finally, extensive simulations have been conducted to demonstrate the flexibility, adaptability, and robustness of the proposed algorithm.
\appendices

\bibliographystyle{IEEEtran}
\bibliography{IEEEabrv,ref}

\end{document}